\documentclass[conference]{IEEEtran}
\usepackage{times}

\usepackage[numbers]{natbib}
\usepackage{multicol}
\usepackage[bookmarks=true]{hyperref}
\usepackage{multirow}

\usepackage{hyperref}
\usepackage{url}
\usepackage{float}
\usepackage{adjustbox}
\usepackage{siunitx}
\usepackage{colortbl}
\usepackage{tabularx}
\usepackage{comment}
\usepackage{mathtools}
\usepackage{amsfonts}  %
\usepackage{cleveref}
\usepackage{graphics}
\usepackage{color}
\usepackage{dsfont}
\usepackage[]{mdframed}
\usepackage{algorithm}
\usepackage{algorithmic}
\usepackage{xparse}
\usepackage{amsmath}
\usepackage{bm}
\usepackage{mathtools}
\usepackage{amssymb}
\usepackage{amsthm}
\usepackage{amssymb}
\usepackage{fvextra}

\usepackage{booktabs}
\usepackage{epigraph}
\usepackage{listings}
\usepackage{xcolor} %
\usepackage{subcaption}

\newtheorem{theorem}{Theorem}

\newtheorem{lemma}{Lemma}

\definecolor{codegreen}{rgb}{0,0.6,0}
\definecolor{codegray}{rgb}{0.5,0.5,0.5}
\definecolor{codepurple}{rgb}{0.58,0,0.82}
\definecolor{backcolour}{rgb}{0.95,0.95,0.92}
\definecolor{promptcolor}{HTML}{D1D0F2}
\definecolor{promptcolorheader}{HTML}{bdbcec}
\definecolor{skillblue}{RGB}{220,235,255}

\newcommand{\ofit}[1]{\textcolor{red!70!black}{$-#1$}}
\newcommand{\gen}[1]{\textcolor{green!55!black}{$+#1$}}
\newcommand{\inc}[1]{%
  \rlap{\textsubscript{\scriptsize\color{green!60!black}#1}}%
}

\newcommand{\dec}[1]{%
  \rlap{\textsubscript{\scriptsize\color{red!70!black}#1}}%
}

\newcommand{\incb}[1]{\textsubscript{\scriptsize\color{green!60!black}#1}}

\newlength{\sbw}

\definecolor{subtabgray}{gray}{0.92}
\definecolor{promptcolor}{HTML}{E3F0FA}
\definecolor{promptcolorheader}{HTML}{B5D6ED}
\definecolor{prompttitletext}{HTML}{1B3A5C}
\lstdefinestyle{pythonstyle}{
    language=Python,
    basicstyle=\small\ttfamily,
    keywordstyle=\color{blue},
    stringstyle=\color{red},
    commentstyle=\color{green!50!black},
    morecomment=[l]{\#},
    showstringspaces=false,
    numbers=left,
    numberstyle=\tiny\color{gray},
    frame=single,
    breaklines=true,
    tabsize=4
}

\usepackage{xcolor} %
\usepackage{hyperref} %
\hypersetup{
    colorlinks=true,
}

\usepackage{xargs} %
\usepackage[colorinlistoftodos,prependcaption,textsize=tiny]{todonotes}
\newcommandx{\wrn}[2][1=]{\todo[linecolor=red,backgroundcolor=red!25,bordercolor=red,#1]{#2}}
\newcommandx{\cmt}[2][1=]{\todo[linecolor=blue,backgroundcolor=blue!25,bordercolor=blue,#1]{#2}}

\def \H1{H1}
\def \G1{G1}

\IEEEoverridecommandlockouts

\begin{document}
\makeatletter
\let\@oldmaketitle\@maketitle%

\makeatother

\title{
Rethinking Self-Evolution: A Constrained Exploration-Exploitation Process for Mitigating Skill Overfitting
}

\def\cameraready{0}  %

\author{
\IEEEauthorblockN{
Hongqiang Lin$^\dagger$\textsuperscript{1},
Chao Liu$^\dagger$\textsuperscript{2},
Xiaofan Bai\textsuperscript{2},
Xuan Jin\textsuperscript{2},
Yuhong Li\textsuperscript{2},
Nenggan Zheng$^*$\textsuperscript{1},
Xipeng Cao$^*$\textsuperscript{2}
}
\IEEEauthorblockA{
\textsuperscript{1}Zhejiang University\qquad
\textsuperscript{2}Alibaba Group
}
\thanks{$^\dagger$ Equal contribution. $^*$Corresponding authors.}
}

\maketitle

\begin{abstract}
Enabling large language model (LLM) agents to accumulate and reuse experience from past interactions remains a central challenge in real-world applications. A promising solution is to treat skills as trainable states and optimize them in the same way as model parameters in neural network training. However, data-driven skill optimization is prone to overfitting to the limited trajectories collected from real environments. Overexploiting these trajectories overfits the current batch, while unconstrained exploration causes regression on previously solved cases. This tension motivates a constrained search view of skill self-evolution, governed by an exploration--exploitation trade-off. We propose SkillBoost, a three-stage framework that mitigates both risks: structured exploitation localizes observed failures to editable skill components, prior-guided exploration draws on prior knowledge in the LLM to generate diverse repair candidates, and verified acceptance commits a candidate only when it improves performance within a regression bound. Experiments across 23 model--benchmark configurations show that SkillBoost achieves state-of-the-art performance while mitigating overfitting, outperforming both human-crafted and LLM-generated skills. Transfer experiments further show that optimized skills can be reused by other agents on similar tasks. The code is available at \url{https://github.com/HQ-Lin/SkillBoost/tree/main}.
\end{abstract}

\section{Introduction}
Large language model (LLM) agents are increasingly deployed in real-world scenarios where they must handle a sequence of tasks arriving over time \cite{opus45report,qwen37,qwen37plus,wang2025openhands}. In such settings, agents are expected not only to solve isolated tasks but also to accumulate experience and improve their behavior across repeated interactions \cite{fang2025comprehensive}. A straightforward way to adapt agents is to post-train their parameters. However, post-training often requires substantial computation, curated data, and validation, making rapid adaptation difficult under shifting data distributions. These limitations motivate skill self-evolution as a lightweight alternative: agents continuously refine reusable external skills based on historical interactions, enabling rapid and iterative adaptation without modifying the underlying LLM parameters \cite{jiang2026sok,xu2026agent,rao2026skillopt}.

\begin{figure}[t!] 
    \centering       
    \includegraphics[width=0.99\linewidth]{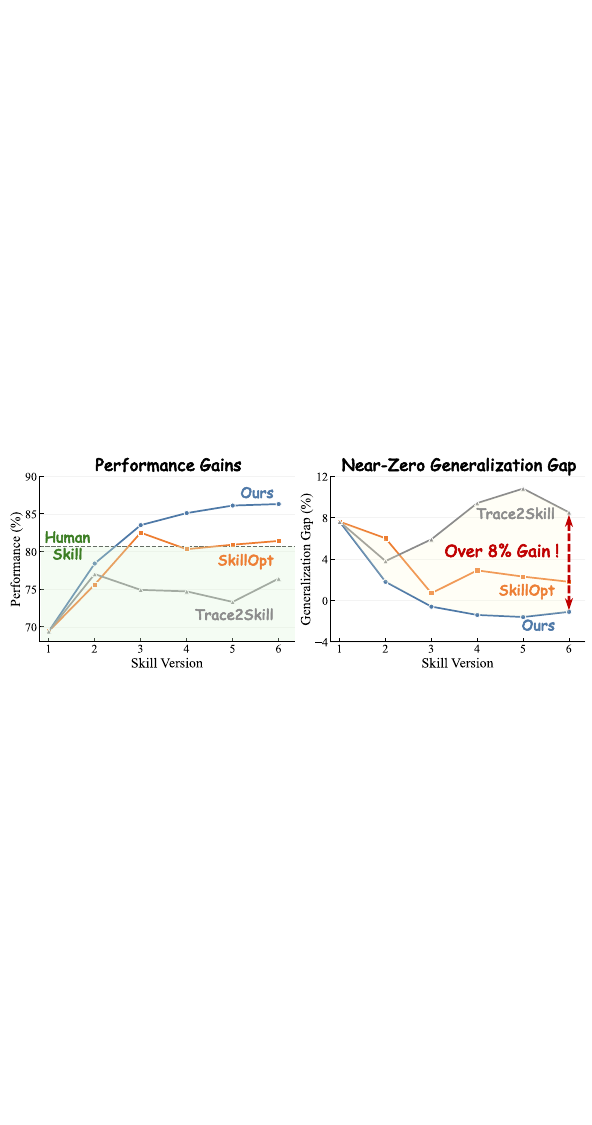}       
    \caption{Effective skill updates should improve performance without sacrificing generalization ability. Our method steadily improves performance across six skill versions, surpasses the human-crafted skill baseline, and keeps the generalization gap near zero.} 
    \label{fig:toy}
\end{figure}

Building on this lightweight adaptation paradigm, a prominent line of work casts skill self-evolution as an optimization problem. In these methods, skills are treated as external parameters that can be updated either through a process similar to gradient-based training \cite{yang2026skillopt,wang2026skillgrad,yu2026skilladaptor,liu2026skillforge} or, more recently, through zeroth-order search over skill text \cite{shen2026skillopt}. These methods share a common objective: they minimize the loss only on the current batch of interactions. Because each batch contains only a few trajectories, the learned skill may fit the observed cases well but fail to generalize to unseen tasks \cite{zheng2025lifelongagentbench,zhong2026skilllearnbench,ma2026skillclaw}. Fitting a skill too closely to such a narrow sample may therefore bias it toward case-specific patterns that are unlikely to appear again. Figure~\ref{fig:toy} shows the consequence of this objective in deployment benchmarks: successive skill updates improve performance on observed cases, but these gains fail to transfer to held-out tasks, resulting in a large generalization gap. We refer to this phenomenon as \textbf{skill overfitting}: the skill gains accuracy on short-term observations while losing robustness to later tasks from different distributions. Since this problem stems from an objective that focuses only on fitting observed interactions, simply collecting more trajectories under the same objective cannot address its root cause.

This limitation calls for a different formulation. We argue that skill self-evolution is not a data-fitting problem, but a constrained search problem shaped by an exploration--exploitation trade-off \cite{sutton2018reinforcement}. \textbf{Exploitation} uses the current failure trajectories to attribute errors to the skill components that need to be repaired. \textbf{Exploration} draws on prior knowledge in the LLM to propose multiple candidate edits within this localized repair space. Either side alone is risky: overexploiting the current batch leads to overfitting, while unconstrained edits can break cases that already work. This trade-off gives rise to a multi-step sequential decision problem: each accepted edit changes the skill used in later rounds and thus changes the failures observed next. Skill self-evolution therefore requires repeated decisions over which failures to exploit, which candidate edits to explore, and which verified edits to keep.

We formalize this sequential decision problem as a Markov Decision Process (MDP). In each round, the agent starts from the current skill and executes tasks to collect environment feedback. Based on this feedback, it proposes candidate edits, evaluates the edited skills, and decides whether to accept an edit as the next skill. These steps instantiate the MDP components: the current skill is the state, candidate edits are the actions, and evaluation feedback determines whether an edit is accepted. Under this formulation, we propose \textbf{SkillBoost}, which runs in three stages that balance the exploration--exploitation trade-off in skill self-evolution. \emph{Structured exploitation} attributes the observed failures to specific skill components, confining the subsequent edits to those components while leaving the rest of the skill unchanged. \emph{Prior-guided exploration} then uses the LLM to generate diverse repair candidates for those components, drawing on prior knowledge rather than batch-specific patterns. Finally, \emph{verified acceptance} commits a candidate only when it improves performance without excessive regression on previously solved cases. An accepted edit must therefore fix the current failures while preserving existing behavior, so it repairs the failure mode instead of memorizing the batch.

We further evaluate SkillBoost across both single-turn reasoning and multi-turn agentic tasks. As demonstrated in Table~\ref{tab:experiment}, SkillBoost achieves consistent improvements over both human-crafted and LLM-generated baselines.  We summarize our contributions as follows:
\begin{itemize}
    \item We identify overfitting as a central challenge in skill self-evolution and recast the process as a heuristic search governed by an exploration--exploitation trade-off.
    \item We propose \textbf{SkillBoost}, a skill self-evolution framework that mitigates overfitting by localizing failures to specific skill components, generating diverse candidate edits, and accepting only verified improvements. 
    \item SkillBoost achieves state-of-the-art performance across 23 model--benchmark configurations, outperforming both human-crafted and LLM-generated skills while reducing overfitting to a near-zero test--train gap.
    \item Extensive ablation studies demonstrate the contribution of each component in SkillBoost. Transfer experiments also show that the learned skills generalize well across different agents.
	\end{itemize}

\section{Related Work}
\subsection{Agentic Skills as Procedural Abstractions}
Agentic skills serve as reusable procedural abstractions that guide LLM behavior without parameter modification, positioning them between atomic tool invocation and high-level task orchestration \cite{huang2026rethinkingmemorymechanismsfoundation,li2026organizing,vishe2026skill,wei2026evomemorybenchmarkingllmagent}. Skills are naturally suited for file-based representation because they encapsulate task decomposition strategies, execution workflows, and decision heuristics in a modular format that supports retrieval, composition, and editing \cite{ouyang2026skilloslearningskillcuration,wang2025inducing,wang2025agent}. This design has been widely adopted across systems that organize experiences into skill libraries \cite{huang2026raw,wang2024voyager}, autonomously discover skills through interaction \cite{ni2026trace2skill,wang2026skillxautomaticallyconstructingskill,yang2026autoskill,zheng2025skillweaverwebagentsselfimprove}, or construct multi-file skill packages with verifier-guided refinement \cite{zhang2026coevoskills}. However, these approaches typically treat skill generation as one-time synthesis, without addressing how skills should be iteratively refined as agents accumulate more experience. 

\subsection{Evolution of Agentic Skills}
Skill self-evolution enables LLM agents to accumulate and refine reusable competencies through closed-loop interaction \cite{chen2026scaling,fang2025comprehensive}. Existing work can be broadly organized into extraction-based and refinement-based approaches. Extraction-based methods distill execution trajectories into structured skill packages, typically through prompt-based trajectory-to-skill synthesis \cite{ni2026trace2skill,qiu2026autorefine,shi2026search,yang2026autoskill}. These methods are useful for skill initialization, but do not specify how skills should keep improving after deployment. Refinement-based methods improve an existing skill iteratively, either through lifecycle agents deployed in closed-loop settings \cite{he2026evotest,zhang2026coevoskills,lin2026muse} or by training a dedicated skill generator with reinforcement learning \cite{ouyang2026skilloslearningskillcuration,vishe2026skill,xia2026skillrl}. More recently, optimization-driven methods treat skills as external trainable parameters and update them through gradient-like or zeroth-order procedures \cite{wang2026skillgrad,yang2026skillopt,yu2026skilladaptor,liu2026skillforge,shen2026skillopt}. While these methods systematically improve skills, they overfit to the observed trajectory batch by encoding batch-specific patterns rather than transferable knowledge. Consequently, gains on current tasks fail to generalize, and edits benefiting observed cases often degrade previously stable behavior. This constitutes a skill-level form of overfitting that existing methods do not explicitly control. SkillBoost addresses this problem by balancing failure-driven exploitation with prior-guided exploration: it localizes failures to skill components, generates diverse candidate repairs, and commits only verified improvements.

\section{Methodology}
\begin{figure*}[t!]
    \centering
    \includegraphics[width=0.99\textwidth]{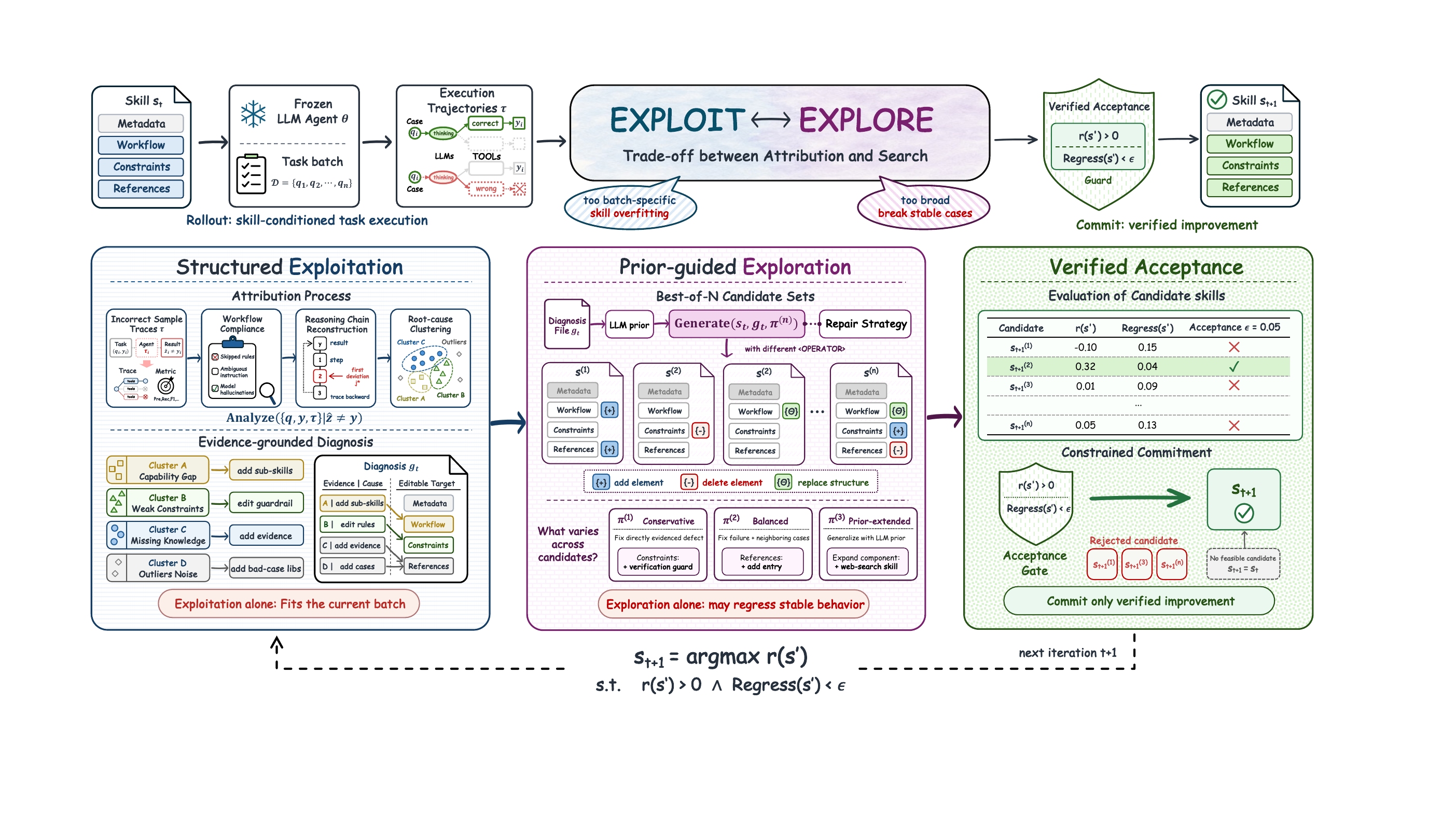}
    \caption{Overview of SkillBoost. \textbf{Top:} in each iteration, a frozen LLM agent executes tasks conditioned on the current skill $s_t$, and an acceptance gate commits only verified improvements to yield $s_{t+1}$. \textbf{Bottom:} backward optimization attributes failures to skill components (\emph{Structured Exploitation}), generates $N$ candidates under different repair strategies (\emph{Prior-guided Exploration}), and selects the highest-gain candidate passing the gate (\emph{Verified Acceptance}).}
	\label{fig:overview}
\end{figure*}

\subsection{Self-Evolution Formulation}
The current skill $s_t$ is represented as a structured state vector, whose coordinates correspond to key components (e.g., Metadata, Workflow, Constraints, References). Under this coordinate-based representation, the agent can identify and edit the components responsible for failures. The evolution process is naturally Markovian: the performance of the next version depends only on the current skill state and the applied edits. We therefore model self-evolution as an MDP $\mathcal{M}=(\mathcal{S}, \mathcal{A}, P, r)$. A state $s_t \in \mathcal{S}$ is the structured skill representation at version $t$. To evolve this state, the agent selects an action $a_t \in \mathcal{A}$ consisting of targeted edits to specific skill components, where each edit specifies an operation type, a target component, and the associated content. The transition function $P: \mathcal{S} \times \mathcal{A} \rightarrow \mathcal{S}$ deterministically applies these edits to yield $s_{t+1} = P(s_t, a_t)$. Finally, the reward $r$ measures the performance change caused by the modified components.
\subsection{Evolution Loop Overview}
We decompose each evolution iteration into a forward rollout phase and a backward optimization phase (see Figure~\ref{fig:overview}). The complete procedure is outlined in Algorithm~\ref{algo1}. 

\subsubsection{Forward Rollout: Skill-Conditioned Task Execution}
Given the current skill version $s_t$ at evolution step $t$, the LLM agent executes a batch of tasks from dataset $\mathcal{D} = \{(q_i, y_i)\}_{i=1}^{M}$, where $q_i$ is the task input and $y_i$ is the ground-truth answer. For each $q_i$, the agent produces a prediction $\hat{y}_i = (\tau_i, \hat{z}_i)$ conditioned on $s_t$: $(\tau_i, \hat{z}_i) \sim p_{\theta}(\cdot \mid q_i, s_t)$, where $\theta$ denotes the frozen LLM parameters, $\tau_i = \{(o_i^{(j)}, \texttt{tool}_i^{(j)}, \texttt{out}_i^{(j)}) \mid j=1, \cdots, m \}$ is the step-level execution trajectory consisting of observations, tool invocations, and tool outputs, and $\hat{z}_i$ is the final answer. The forward phase collects these trajectories for both successful and failed cases, providing causal evidence for fine-grained failure attribution in the backward optimization phase.

\subsubsection{Backward Optimization: Feedback-Driven Skill Refinement}
Backward optimization turns rollout feedback into targeted skill updates. Rather than fitting the whole skill to observed trajectories, it attributes failures to specific skill components, generates diverse candidate edits over those components, and accepts only verified improvements.

\subsection{Optimization in the Evolution Loop}

\subsubsection{Structured Exploitation}
Given failed cases $\{(q_i, y_i, \tau_i) \mid \hat{z}_i \neq y_i\}$ from forward rollout, the attribution stage analyzes the step-level trajectory $\tau_i = \{(o_i^{(j)}, \texttt{tool}_i^{(j)}, \texttt{out}_i^{(j)})\}$ to find where and why the current skill fails, where $o_i^{(j)}$, $\texttt{tool}_i^{(j)}$, and $\texttt{out}_i^{(j)}$ denote the observation, tool call, and output at step $j$. A preparation step extracts failed cases and summarizes their execution trajectories. An analyzer agent then runs three checks on $\tau_i$. First, \emph{Workflow compliance} checks whether the step sequence $\{(o_i^{(j)}, \texttt{tool}_i^{(j)})\}_{j}$ follows the workflow defined by the skill. It maps three trajectory patterns to three defects: steps that are skipped indicate weak procedural constraints; steps that follow the workflow yet lead to a wrong final answer indicate ambiguous instructions; and a conclusion that is not supported by the observations $o_i^{(j)}$ or tool outputs $\texttt{out}_i^{(j)}$ indicates that the skill does not enforce evidence-grounded reasoning. Second, \emph{reasoning chain reconstruction} walks $\tau_i$ backward from the final output to the rule each step uses, finds the first step $j^{\star}$ that departs from the intended rule, and checks whether that rule was bypassed or never reached. Third, \emph{root cause clustering} groups failures that share the same defect, so that a single edit can repair the whole group. It also splits root causes into two kinds: strategy defects, which can be fixed by editing the skill, and capability gaps, which recur across versions and cannot be fixed by editing rules. This stage uses only what the observed failures confirm. It produces a \emph{diagnosis} $g_t$ that is grounded in trajectory evidence:
\begin{equation}
\begin{aligned}
g_t &= \texttt{Analyze}(\{(q_i, y_i, \tau_i) \mid \hat{z}_i \neq y_i\}) \\
    &= \{(\text{cause}_k, \text{target}_k)\}_{k=1}^{K},
\end{aligned}
\end{equation}
where each pair $(\text{cause}_k, \text{target}_k)$ summarizes a group of failed cases whose trajectories exhibit the same defect. The $\text{cause}_k$ is grounded in step-level evidence from the trajectories in that group, citing the observations $o_i^{(j)}$, tool calls $\texttt{tool}_i^{(j)}$, and outputs $\texttt{out}_i^{(j)}$ up to the failure step $j^{\star}$. The $\text{target}_k$ specifies the editable skill component (e.g., workflow, constraints, or references) that should be repaired. Together, diagnosis $g_t$ maps trajectory-level failures to specific skill components, confining subsequent edits to the failure-attributed dimensions rather than the entire skill.


\subsubsection{Prior-guided Exploration}
The diagnosis $g_t$ identifies where the skill fails, but there are many valid ways to fix it, and the best one is not known in advance. A cautious edit changes only what the trajectories confirm: it rarely breaks working cases, but it may leave similar failures unfixed. A bolder edit uses the LLM prior to extend the fix to similar unseen failures: it covers more cases, but it may break cases that already passed. We therefore keep $g_t$ fixed and only change the repair strategy $\pi^{(n)}$. This strategy decides the edit scope and which error to fix first. Each strategy maps to a unique edit plan and produces one candidate:
\begin{equation}
a_t^{(n)} = \texttt{Generate}(s_t, g_t, \pi^{(n)}), \quad n = 1, \ldots, N,
\end{equation}
where each $a_t^{(n)}$ is a set of edits, and each edit adds, deletes, or replaces text at one place in the skill. Applying the transition yields the candidate skill $s_{t+1}^{(n)} = P(s_t, a_t^{(n)})$. All candidates share the same diagnosis $g_t$ and differ only in $\pi^{(n)}$, so each one is still based on a real failure but tries a different way to fix it. This $N$-candidate pool is our Best-of-$N$ exploration set: it lets the agent explore several prior-guided repair directions before acceptance.
\subsubsection{Verified Acceptance}
Given the Best-of-$N$ exploration set, verified acceptance commits only to candidates with verified net improvement. For a candidate skill $s' = P(s_t, a)$, where $P$ denotes the skill transition function, we define the gain $r(s') = \texttt{Eval}(s', \mathcal{D}) - \texttt{Eval}(s_t, \mathcal{D})$, and two case-level metrics: $\texttt{Fix}(s') = \frac{1}{M} \sum_i \mathbb{I}[\hat{z}_i^{(t)} \neq y_i \wedge \hat{z}_i' = y_i] $, the previously failed cases repaired by the edit, and $\texttt{Regress}(s') =\frac{1}{M} \sum_i \mathbb{I}[\hat{z}_i^{(t)} = y_i \wedge \hat{z}_i' \neq y_i] $, the previously correct cases broken by it. Since each case is scored as 0/1, the gain decomposes as $r(s') \propto \texttt{Fix}(s') - \texttt{Regress}(s')$. A candidate is accepted only if it improves the full-set score while keeping regressions under control: $\texttt{Accept}(s') \iff r(s') > 0 \land \texttt{Regress}(s') < \epsilon$, where $\epsilon$ is the regression threshold that limits how many previously solved cases an edit may break. The first condition is equivalent to $\texttt{Regress}(s') < \texttt{Fix}(s')$, so acceptance carries a clear meaning: a candidate must fix more cases than it breaks, and it is rejected once its regressions match or exceed its fixes, no matter how many failures it repairs. The final update selects the highest-gain candidate satisfying this gate:
\begin{equation}
    \label{eq:max_n}
    n^* = \arg\max_{n \in \{1,\ldots,N\}} r(s_{t+1}^{(n)})
    \quad \text{s.t.} \quad
    \texttt{Accept}(s_{t+1}^{(n)}).
\end{equation}
If no candidate passes the gate, the skill remains unchanged: $s_{t+1} = s_t$. This constrained selection ensures that the skill is updated only when a candidate brings a clear improvement.

\begin{algorithm}[t!]
\caption{Procedure of SkillBoost.}
\label{algo1}
\begin{algorithmic}[1]
\REQUIRE Initial skill $s_0$, dataset $\mathcal{D}$, iterations $T$, pool size $N$.
\FOR{$t = 0, 1, \ldots, T-1$}
    \STATE \textbf{Forward Rollout:} Execute tasks conditioned on $s_t$: $\hat{y}_i \sim p_\theta(\cdot \mid q_i, s_t)$ for all $q_i \in \mathcal{D}$.
    \STATE $\triangleright$ \textit{Backward Optimization.}
    \STATE \textbf{Structured Exploitation:} Attribute failures to a diagnosis: $g_t \leftarrow \texttt{Analyze}(\{(q_i, y_i, \hat{y}_i) \mid \hat{y}_i \neq y_i\})$.
    \STATE \textbf{Prior-guided Exploration:} Form $N$ repair strategies $\pi^{(n)}$, $n=1,\ldots,N$, and generate candidates: $a_t^{(n)} \leftarrow \texttt{Generate}(s_t, g_t, \pi^{(n)})$, $\; s_{t+1}^{(n)} \leftarrow P(s_t, a_t^{(n)})$.
     \STATE \textbf{Verified Acceptance:} Solve optimization problem $n^* \leftarrow \arg\max_{n}\; r(s_{t+1}^{(n)}) \;\;\text{s.t.}\;\; \texttt{Accept}(s_{t+1}^{(n)}) $, then:
   \begin{equation*}
  s_{t+1} \leftarrow 
    \begin{cases} 
            s_{t+1}^{(n^*)} & \text{if } n^* \text{ exists}, \\ 
            s_t & \text{otherwise}.
    \end{cases}
   \end{equation*}
\ENDFOR
\RETURN Final skill $s_T$.
\end{algorithmic}
\end{algorithm}

\subsection{Theoretical Analysis of Best-of-$N$ Exploration}
We provide a lightweight analysis of Best-of-$N$ exploration under verified acceptance. We abstract each skill as a point $x \in \mathbb{R}^d$, where $F(x)$ denotes its task performance, $D(x, x')$ the edit distance between two skills, $x^*$ the target skill state, and $\ell=\|x^*-x_0\|$ the initial distance to it. At each iteration, the exploration stage proposes $N$ candidates within an $\epsilon$-neighborhood of $x_t$, and the acceptance stage keeps the highest-gain feasible update by minimizing $G(x, x_t) = -(F(x) - F(x_t))$:
\begin{equation}
\small
    x_{t+1} = \arg\min_{x} \; G(x, x_t), \quad \text{s.t.} \quad
    \begin{cases}
        F(x_t) - F(x) < 0 \\
        D(x, x_t) \leq \epsilon
    \end{cases}
    .
\end{equation}
Define the effective progress at iteration $t$ as the largest projection of candidate updates toward the target:
$
    \delta_t
    =
    \max_{i=1,\ldots,N}
    \left\langle x^{(i)}-x_t,\;
    \frac{x^*-x_t}{\|x^*-x_t\|}
    \right\rangle .
$
With only $N$ discrete candidates, the best update may not align with the target direction. The following theorem gives an expected lower bound for this finite-candidate effect.

\begin{theorem}[Best-of-$N$ Directional Progress]
\label{thm:finite}
Assume that the $N$ candidate updates are independently sampled from $\mathcal{B}(x_t,\epsilon)\subset\mathbb{R}^d$ at each iteration. Let $T_\eta=\inf\{t\geq 0:\|x_t-x^*\|\leq \eta\}$ be the first hitting time of the $\eta$-neighborhood of the target, where $0\leq \eta<\ell$ and $\ell=\|x^*-x_0\|$. Then
\begin{equation}
    \mathbb{E}[\delta_t]
    \approx
    \epsilon \sqrt{\frac{2\ln N}{d}},
    \qquad
    \mathbb{E}[T_\eta]
    \ge
    \frac{\ell-\eta}{\epsilon}
    \sqrt{\frac{d}{2\ln N}} .
\end{equation}
\end{theorem}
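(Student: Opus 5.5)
The plan is to treat the two claims separately. First I get the expected directional progress of a single iteration from the extreme-value behaviour of one coordinate of a uniform point in a high-dimensional ball. Then I turn that per-step bound into a lower bound on the hitting time with an optional-stopping (Wald-type) argument.

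\emph{Step 1 (one-dimensional marginal).} Fix $t$ and the unit vector $u_t=(x^*-x_t)/\|x^*-x_t\|$. Since $x^{(i)}-x_t$ is uniform on $\mathcal{B}(0,\epsilon)$, the projection $Z_i=\langle x^{(i)}-x_t,u_t\rangle$ has the same law as the first coordinate of a uniform point in the ball, by rotational invariance. Its density is proportional to $(1-z^2/\epsilon^2)^{(d+1)/2}$ on $[-\epsilon,\epsilon]$. For large $d$ this is close to a centered Gaussian with variance $\epsilon^2/(d+2)\approx\epsilon^2/d$. I would justify this by writing the uniform point as $\epsilon U^{1/d}\theta$, with $\theta$ uniform on the sphere, and then using the standard concentration $\sqrt d\,\theta_1\to\mathcal N(0,1)$ together with $U^{1/d}\to1$.

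\emph{Step 2 (maximum of $N$ i.i.d. draws).} Conditionally on $x_t$, $\delta_t=\max_i Z_i$ with $Z_i$ i.i.d. The classical Gaussian extreme-value estimate $\mathbb E[\max_{i\le N}g_i]\le\sqrt{2\ln N}$ is asymptotically tight as $N\to\infty$. Applying it with scale $\epsilon/\sqrt d$ gives $\mathbb E[\delta_t\mid x_t]\approx\epsilon\sqrt{2\ln N/d}$. Averaging over $x_t$ gives the first claim. The "$\approx$" hides the Gaussian approximation and the $o(\sqrt{\ln N})$ extreme-value correction.

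For the second claim I need a genuine upper bound rather than an approximation. I would derive it directly from the moment generating function. The uniform-ball marginal is sub-Gaussian with variance proxy $\epsilon^2/(d+2)\le\epsilon^2/d$, which follows because the ball is a log-concave body, or by comparison with the spherical cap estimate $\Pr(Z>z)\le e^{-dz^2/(2\epsilon^2)}$. Jensen's inequality on $\exp(\lambda\max_i Z_i)$ then yields the standard bound $\mathbb E[\delta_t\mid\mathcal F_t]\le\epsilon\sqrt{2\ln N/d}=:c$ for every $t$ and every history.

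\emph{Step 3 (hitting time).} Let $R_t=\|x_t-x^*\|$. The accepted update $x_{t+1}$ is one of the candidates, or $x_t$ itself if none is accepted, so its step toward the target is at most $\max(\delta_t,0)$. Geometrically, $R_{t+1}\ge R_t-\langle x_{t+1}-x_t,u_t\rangle\ge R_t-\delta_t^+$, since projecting onto $u_t$ cannot increase distance reduction. Define $M_t=R_t+\sum_{s<t}(\mathbb E[\delta_s^+\mid\mathcal F_s]-\delta_s^+)$. Then $R_t+tc'\ge R_0-\sum_{s<t}\delta_s^+$ plus a martingale, where $c'\ge \mathbb E[\delta_s^+\mid\mathcal F_s]$.

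Optional stopping at $T_\eta$ needs care. If $\mathbb E[T_\eta]=\infty$ there is nothing to prove; otherwise the increments are bounded by $\epsilon$, so Wald's identity applies. It gives $\ell-\eta\le \mathbb E\big[\sum_{s<T_\eta}\delta_s^+\big]\le c'\,\mathbb E[T_\eta]$, hence $\mathbb E[T_\eta]\ge(\ell-\eta)/c'$.

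\emph{Main obstacle.} The difficulty is that Wald's argument naturally controls $\mathbb E[\delta_s^+]$, not $\mathbb E[\delta_s]$. For $N\ge2$ the difference is negligible, since $\Pr(\delta_s<0)=2^{-N}$, so $\mathbb E[\delta_s^+]\le c+O(2^{-N}\epsilon/\sqrt d)$. The same sub-Gaussian MGF bound applied to $\max(Z_1,\dots,Z_N,0)$, i.e.\ to $N+1$ variables, gives $c'=\epsilon\sqrt{2\ln(N+1)/d}$. I would state the bound with this constant, or absorb the difference into the theorem's "$\approx$" convention, yielding $\mathbb E[T_\eta]\ge\frac{\ell-\eta}{\epsilon}\sqrt{d/(2\ln N)}$ up to that lower-order correction.
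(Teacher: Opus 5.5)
Your proposal has the same skeleton as the paper's proof. The paper projects each candidate on $u_t$ (variance $\epsilon^2/(d+2)$), replaces the projection by $\mathcal{N}(0,\sigma^2)$ with $\sigma=\epsilon/\sqrt d$, and uses $\mathbb{E}[\max_i Z_i]\approx\sigma\sqrt{2\ln N}$. It then uses $r_{t+1}\ge r_t-\langle v_t,u_t\rangle$, telescopes up to $T_\eta$, and applies the tower property at the stopping time. It stops at $\mathbb{E}[T_\eta]\ge(\ell-\eta)/\mu_N$ and simply substitutes $\mu_N\approx\epsilon\sqrt{2\ln N/d}$.

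Your two departures are both improvements. First, the paper justifies $\langle v_t,u_t\rangle\le\delta_t$ only by saying $x_{t+1}$ is a candidate. That overlooks rounds in which every candidate is rejected and $x_{t+1}=x_t$, where the inequality fails whenever $\delta_t<0$. So the bound with $\mu_N$ is not actually proved. For $N=1$, $\mu_1=0$ would force $\mathbb{E}[T_\eta]=\infty$ even for $F(x)=-\|x-x^*\|$, where only distance-decreasing steps are accepted and $\mathbb{E}[T_\eta]$ is finite. Your $\delta_t^+$ fixes this. Because $\delta_t^+\ge0$, the stopping-time step also follows from Tonelli without assuming $\mathbb{E}[T_\eta]<\infty$. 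Second, passing from $(\ell-\eta)/\mu_N$ to the explicit right-hand side needs an \emph{upper} bound on the per-step progress, which the paper never proves. Your MGF bound supplies it, turning the ``$\ge$'' into a genuine inequality at the price of $\ln(N+1)$ instead of $\ln N$.

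Three details need repair.
\begin{itemize}
\item The marginal density of $Z_i$ is proportional to $(1-z^2/\epsilon^2)^{(d-1)/2}$. Your exponent $(d+1)/2$ gives variance $\epsilon^2/(d+4)$, contradicting your own variance.
\item Neither justification you give yields variance proxy $\epsilon^2/(d+2)$. Log-concavity alone does not make the variance a valid sub-Gaussian proxy. The cap tail bound, via a union bound, only gives $\mathbb{E}[\max_i Z_i]\le(\epsilon/\sqrt d)(\sqrt{2\ln N}+1/\sqrt{2\ln N})$. A clean argument is moment comparison: $\mathbb{E}[Z_i^{2k}]=\epsilon^{2k}(2k-1)!!/\prod_{j=1}^{k}(d+2j)\le\epsilon^{2k}(2k-1)!!/(d+2)^{k}$, which are the even moments of $\mathcal{N}(0,\epsilon^2/(d+2))$. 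The odd moments vanish, so the MGF of $Z_i$ is dominated by the Gaussian one.
\item The displayed inequality in your martingale step is garbled: $tc'$ and $-\sum_{s<t}\delta_s^+$ should not both appear. The chain you actually use, $\ell-\eta\le\mathbb{E}[\sum_{s<T_\eta}\delta_s^+]\le c'\,\mathbb{E}[T_\eta]$, is correct.
\end{itemize}
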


\subsubsection{Remark}Theorem~\ref{thm:finite} suggests why the benefit of increasing $N$ saturates quickly. A larger candidate pool increases the expected directional progress, but this gain grows only as $\sqrt{\ln N}$. Therefore, the iteration bound improves slowly as $N$ increases, while the evaluation cost per round still grows with the pool size. This trade-off explains why very large candidate pools often provide limited extra benefit in practice.

\section{Experiments}

\makeatletter
\@ifundefined{scorew}{\newlength{\scorew}}{}
\@ifundefined{deltaw}{\newlength{\deltaw}}{}
\@ifundefined{modelw}{\newlength{\modelw}}{}
\@ifundefined{modelcolw}{\newlength{\modelcolw}}{}
\@ifundefined{skillw}{\newlength{\skillw}}{}
\makeatother

\settowidth{\scorew}{92.9}
\settowidth{\deltaw}{\incb{+47.4}}
\setlength{\modelw}{7.0em}
\setlength{\modelcolw}{1.4em}
\setlength{\skillw}{3.8em}

\providecommand{\scorecell}[2]{}
\renewcommand{\scorecell}[2]{%
  \makebox[\scorew][r]{#1}\makebox[\deltaw][l]{#2}%
}
\providecommand{\scorebcell}[2]{}
\renewcommand{\scorebcell}[2]{%
  \makebox[\scorew][r]{\textbf{#1}}\makebox[\deltaw][l]{#2}%
}
\providecommand{\modelleft}[1]{}
\renewcommand{\modelleft}[1]{%
  \multirow{6}{*}{\rotatebox{90}{\makebox[\modelw][c]{\small\textbf{#1}}}}%
}
\providecommand{\bhead}[1]{}
\renewcommand{\bhead}[1]{\textbf{#1}}

\makeatletter
\@ifundefined{scorew}{\newlength{\scorew}}{}
\@ifundefined{deltaw}{\newlength{\deltaw}}{}
\makeatother
\settowidth{\scorew}{92.9}
\settowidth{\deltaw}{\incb{+47.4}}

\providecommand{\scorecell}[2]{}
\renewcommand{\scorecell}[2]{%
  \makebox[\scorew][r]{#1}\makebox[\deltaw][l]{#2}%
}
\providecommand{\scorebcell}[2]{}
\renewcommand{\scorebcell}[2]{%
  \makebox[\scorew][r]{\textbf{#1}}\makebox[\deltaw][l]{#2}%
}

\providecommand{\modelrow}[1]{}
\renewcommand{\modelrow}[1]{%
  \multicolumn{5}{@{}c@{}}{%
    \colorbox{skillblue}{%
      \makebox[\dimexpr\columnwidth-2\fboxsep\relax][c]{\textbf{#1}}%
    }%
  }\\%
}
\begin{table}[t!]
    \centering
    \footnotesize
    \setlength{\tabcolsep}{1.8pt}
    \renewcommand{\arraystretch}{1.08}
    \begin{tabularx}{\columnwidth}{@{}l *{4}{>{\centering\arraybackslash}X}@{}}
        \toprule
        \textbf{Skill Source} &
        \textbf{Spreadsheet} &
        \textbf{BFCL-v4} &
        \textbf{LiveMath} &
        \textbf{ALFWorld} \\
        \midrule

        \modelrow{Claude-opus-4-6}
        No skill
            & \scorecell{50.0}{}
            & \scorecell{27.1}{}
            & \scorecell{28.6}{}
            & \scorecell{80.6}{} \\
        Human skill
            & \scorecell{72.5}{\inc{+22.5}}
            & \scorecell{26.5}{\dec{-0.6}}
            & \scorecell{37.8}{\inc{+9.2}}
            & \scorecell{82.2}{\inc{+1.6}} \\
        LLM skill
            & \scorecell{75.0}{\inc{+25.0}}
            & \scorecell{28.3}{\inc{+1.2}}
            & \scorecell{44.1}{\inc{+15.5}}
            & \scorecell{81.6}{\inc{+1.0}} \\
        Trace2Skill
            & \scorecell{73.1}{\inc{+23.1}}
            & \scorecell{38.4}{\inc{+11.3}}
            & \scorecell{46.4}{\inc{+17.8}}
            & \scorecell{85.3}{\inc{+4.7}} \\
        SkillOpt
            & \scorecell{70.4}{\inc{+20.4}}
            & \scorecell{41.9}{\inc{+14.8}}
            & \scorecell{49.6}{\inc{+21.0}}
            & \scorecell{89.8}{\inc{+9.2}} \\
        SkillBoost
            & \scorebcell{82.5}{\incb{\textbf{+32.5}}}
            & \scorebcell{48.5}{\incb{\textbf{+21.4}}}
            & \scorebcell{76.0}{\incb{\textbf{+47.4}}}
            & \scorebcell{92.9}{\incb{\textbf{+12.3}}} \\
        \midrule

        \modelrow{Qwen-3.7-max}
        No skill
            & \scorecell{54.6}{}
            & \scorecell{49.3}{}
            & \scorecell{19.2}{}
            & \scorecell{71.2}{} \\
        Human skill
            & \scorecell{62.5}{\inc{+7.9}}
            & \scorecell{51.0}{\inc{+1.7}}
            & \scorecell{26.4}{\inc{+7.2}}
            & \scorecell{73.4}{\inc{+2.2}} \\
        LLM skill
            & \scorecell{70.7}{\inc{+16.1}}
            & \scorecell{49.0}{\dec{-0.3}}
            & \scorecell{28.8}{\inc{+9.6}}
            & \scorecell{71.5}{\inc{+0.3}} \\
        Trace2Skill
            & \scorecell{67.1}{\inc{+12.5}}
            & \scorecell{38.0}{\dec{-11.3}}
            & \scorecell{33.6}{\inc{+14.4}}
            & \scorecell{74.9}{\inc{+3.7}} \\
        SkillOpt
            & \scorecell{73.9}{\inc{+19.3}}
            & \scorecell{32.7}{\dec{-16.6}}
            & \scorecell{32.8}{\inc{+13.6}}
            & \scorecell{75.4}{\inc{+4.2}} \\
        SkillBoost
            & \scorebcell{77.9}{\incb{\textbf{+23.3}}}
            & \scorebcell{52.6}{\incb{\textbf{+3.3}}}
            & \scorebcell{36.0}{\incb{\textbf{+16.8}}}
            & \scorebcell{82.0}{\incb{\textbf{+10.8}}} \\
        \midrule

        \modelrow{Qwen-3.6-plus}
        No skill
            & \scorecell{52.5}{}
            & \scorecell{50.7}{}
            & \scorecell{17.6}{}
            & \scorecell{60.6}{} \\
        Human skill
            & \scorecell{55.0}{\inc{+2.5}}
            & \scorecell{47.5}{\dec{-3.2}}
            & \scorecell{30.4}{\inc{+12.8}}
            & \scorecell{62.3}{\inc{+1.7}} \\
        LLM skill
            & \scorecell{60.0}{\inc{+7.5}}
            & \scorecell{51.7}{\inc{+1.0}}
            & \scorecell{25.6}{\inc{+8.0}}
            & \scorecell{56.9}{\dec{-3.7}} \\
        Trace2Skill
            & \scorecell{71.5}{\inc{+19.0}}
            & \scorecell{24.0}{\dec{-26.7}}
            & \scorecell{30.4}{\inc{+12.8}}
            & \scorecell{67.7}{\inc{+7.1}} \\
        SkillOpt
            & \scorecell{63.6}{\inc{+11.1}}
            & \scorecell{31.3}{\dec{-19.4}}
            & \scorecell{30.4}{\inc{+12.8}}
            & \scorecell{72.1}{\inc{+11.5}} \\
        SkillBoost
            & \scorebcell{75.0}{\incb{\textbf{+22.5}}}
            & \scorebcell{53.0}{\incb{\textbf{+2.3}}}
            & \scorebcell{34.4}{\incb{\textbf{+16.8}}}
            & \scorebcell{87.5}{\incb{\textbf{+26.9}}} \\
        \midrule

        \modelrow{DeepSeek-v4-pro}
        No skill
            & \scorecell{49.6}{}
            & \scorecell{27.3}{}
            & \scorecell{28.0}{}
            & \scorecell{55.2}{} \\
        Human skill
            & \scorecell{70.4}{\inc{+20.8}}
            & \scorecell{20.7}{\dec{-6.6}}
            & \scorecell{24.8}{\dec{-3.2}}
            & \scorecell{54.5}{\dec{-0.7}} \\
        LLM skill
            & \scorecell{70.0}{\inc{+20.4}}
            & \scorecell{22.0}{\dec{-5.3}}
            & \scorecell{29.6}{\inc{+1.6}}
            & \scorecell{57.1}{\inc{+1.9}} \\
        Trace2Skill
            & \scorecell{60.4}{\inc{+10.8}}
            & \scorecell{28.4}{\inc{+1.1}}
            & \scorecell{33.6}{\inc{+5.6}}
            & \scorecell{59.6}{\inc{+4.4}} \\
        SkillOpt
            & \scorecell{69.6}{\inc{+20.0}}
            & \scorecell{34.6}{\inc{+7.3}}
            & \scorecell{38.4}{\inc{+10.4}}
            & \scorecell{60.5}{\inc{+5.3}} \\
        SkillBoost
            & \scorebcell{80.0}{\incb{\textbf{+30.4}}}
            & \scorebcell{44.6}{\incb{\textbf{+17.3}}}
            & \scorebcell{46.4}{\incb{\textbf{+18.4}}}
            & \scorebcell{64.3}{\incb{\textbf{+9.1}}} \\
        \midrule

        \modelrow{Kimi-k2.6}
        No skill
            & \scorecell{45.0}{}
            & \scorecell{13.0}{}
            & \scorecell{40.0}{}
            & \scorecell{63.5}{} \\
        Human skill
            & \scorecell{50.7}{\inc{+5.7}}
            & \scorecell{12.5}{\dec{-0.5}}
            & \scorecell{42.4}{\inc{+2.4}}
            & \scorecell{63.0}{\dec{-0.5}} \\
        LLM skill
            & \scorecell{60.0}{\inc{+15.0}}
            & \scorecell{7.5}{\dec{-5.5}}
            & \scorecell{44.0}{\inc{+4.0}}
            & \scorecell{64.4}{\inc{+0.9}} \\
        Trace2Skill
            & \scorecell{46.8}{\inc{+1.8}}
            & \scorecell{16.2}{\inc{+3.2}}
            & \scorecell{46.4}{\inc{+6.4}}
            & \scorecell{64.7}{\inc{+1.2}} \\
        SkillOpt
            & \scorecell{60.4}{\inc{+15.4}}
            & \scorecell{13.7}{\inc{+0.7}}
            & \scorecell{48.8}{\inc{+8.8}}
            & \scorecell{68.1}{\inc{+4.6}} \\
        SkillBoost
            & \scorebcell{62.5}{\incb{\textbf{+17.5}}}
            & \scorebcell{20.8}{\incb{\textbf{+7.8}}}
            & \scorebcell{50.4}{\incb{\textbf{+10.4}}}
            & \scorebcell{70.1}{\incb{\textbf{+6.6}}} \\
        \bottomrule
    \end{tabularx}
    \caption{Main results on held-out test splits. Values are percentages. Green and red subscripts denote absolute changes relative to the No skill baseline within the same model block. Bold values indicate SkillBoost.}
    \label{tab:experiment}
\end{table}

\subsection{Experiment Setup}

\subsubsection{Benchmarks}
We evaluate Claude, Qwen, Kimi, and DeepSeek on held-out test splits from five agent benchmarks: SpreadsheetBench~\cite{ma2024spreadsheetbench} for multi-round code generation, BFCL-v4~\cite{patil2025berkeley} for multi-turn tool calling, LiveMathematicianBench~\cite{he2026livemathematicianbench} for mathematical reasoning, ALFWorld~\cite{shridhar2021alfworld} for embodied interaction, and DocVQA~\cite{Mathew2021docvqa} for multimodal question answering. These benchmarks expose strong overfitting pressure because their training splits are orders of magnitude smaller than post-training corpora and production data.


\subsubsection{Baselines}
We compare SkillBoost with five baselines. \emph{No-Skill} uses the frozen target model with its default system prompt. \emph{Human Skill} uses expert-written task skills. \emph{One-Shot LLM Skill} is generated once from task descriptions without refinement. \emph{Trace2Skill} \cite{ni2026trace2skill} distills skills from trajectories. \emph{SkillOpt} \cite{yang2026skillopt} is a state-of-the-art skill optimization method. All methods share the same target model, held-out test split, and scorer.

\subsection{Performance Comparison}

\subsubsection{Main Results}
Across 20 model--benchmark pairs (Table~\ref{tab:experiment}), SkillBoost matches or exceeds the strongest no-skill, human-skill, and LLM-skill baseline, with per-model gains ranging from +10.6 to +28.4. The largest gains occur on benchmarks with strong procedural requirements, such as LiveMathematicianBench (+47.4) and SpreadsheetBench (+32.5) for Claude-opus-4-6, where one-shot prompting struggles to enforce task-specific execution rules. Human-written and LLM-generated skills sometimes degrade performance, especially on BFCL-v4 and ALFWorld, while SkillBoost yields non-negative gains on all pairs. This supports our central hypothesis that iterative skill optimization is more reliable than one-shot skill generation. On DocVQA (Table~\ref{tab:multimodal}), SkillBoost outperforms human-written and LLM-generated skills on Qwen-3.6-plus, Qwen-3.7-plus, and Kimi-k2.6, extending its benefits beyond text-only tasks to multimodal reasoning.


\begin{table}[t!]
\centering
\begin{adjustbox}{width=\columnwidth}
\begin{tabular}{l ccc}
\toprule
\textbf{Skill Source} & \textbf{Qwen-3.6-plus} & \textbf{Qwen-3.7-plus} & \textbf{Kimi-k2.6} \\
\midrule
No skill & 86.0 & 86.7 & 86.7 \\
Human skill & 86.7\inc{+0.7} & 89.3\inc{+2.6} & 90.0\inc{+3.3} \\
LLM skill & 90.0\inc{+4.0} & 88.7\inc{+2.0} & 90.7\inc{+4.0} \\
\rowcolor{gray!15}
SkillBoost & \textbf{92.0}\inc{\textbf{+6.0}} & \textbf{90.7}\inc{\textbf{+4.0}} & \textbf{91.3}\inc{\textbf{+4.6}} \\
\bottomrule
\end{tabular}
\end{adjustbox}
\caption{Performance comparison on DocVQA dataset.}
\label{tab:multimodal}
\end{table}

\subsubsection{Overfitting Analysis}
Table~\ref{tab:overfitting} reports the generalization gap $\Delta = \text{Test} - \text{Train}$ per benchmark and backbone. SkillOpt and Trace2Skill consistently suffer large negative $\Delta$ values, indicating overfitting from purely exploiting the current training split. In contrast, SkillBoost keeps $\Delta$ near zero or positive across benchmarks and backbones. This confirms that balancing failure-driven exploitation with prior-guided exploration prevents memorizing transient trajectories while preserving test generalization.

\begin{table}[t!]
    \centering
    \footnotesize
    \renewcommand{\arraystretch}{1.12}
    \setlength{\tabcolsep}{3pt}
    \begin{tabular}{ll cc|c}
        \toprule
        \textbf{Benchmark} & \textbf{Backbone} & \textbf{SkillOpt} & \textbf{Trace2Skill} & \textbf{SkillBoost} \\
        \midrule
        \multirow{5}{*}{Spreadsheet}
          & Claude-opus-4-6  & \ofit{4.6}  & \ofit{5.7}  & \gen{1.3} \\
          & Qwen-3.7-max     & \ofit{12.4} & \ofit{12.9} & \ofit{0.9} \\
          & Qwen-3.6-plus    & \ofit{3.9}  & \ofit{14.8} & \gen{2.5} \\
          & DeepSeek-v4-pro  & \ofit{6.7}  & \ofit{8.4}  & \ofit{1.3} \\
          & Kimi-k2.6        & \ofit{2.1}  & \ofit{7.0}  & \ofit{3.8} \\
        \midrule
        \multirow{5}{*}{BFCL-v4}
          & Claude-opus-4-6  & \ofit{3.1}  & \ofit{4.9}  & \ofit{0.7} \\
          & Qwen-3.7-max     & \ofit{9.0}  & \ofit{12.8} & \gen{0.1} \\
          & Qwen-3.6-plus    & \ofit{7.0}  & \ofit{17.7} & \ofit{0.3} \\
          & DeepSeek-v4-pro  & \ofit{8.7}  & \ofit{9.1}  & \ofit{0.4} \\
          & Kimi-k2.6        & \ofit{4.6}  & \ofit{3.0}  & \gen{0.8} \\
        \midrule
        \multirow{5}{*}{LiveMath}
          & Claude-opus-4-6  & \ofit{4.7}  & \ofit{10.7} & \gen{1.7} \\
          & Qwen-3.7-max     & \ofit{15.8} & \ofit{12.1} & \ofit{1.1} \\
          & Qwen-3.6-plus    & \ofit{26.7} & \ofit{12.5} & \gen{0.1} \\
          & DeepSeek-v4-pro  & \ofit{13.0} & \ofit{15.0} & \gen{0.7} \\
          & Kimi-k2.6        & \ofit{19.8} & \ofit{7.9}  & \ofit{1.0} \\
        \midrule
        \multirow{5}{*}{ALFWorld}
          & Claude-opus-4-6  & \ofit{2.9}  & \ofit{6.7}  & \gen{1.6} \\
          & Qwen-3.7-max     & \ofit{3.3}  & \ofit{7.8}  & \gen{0.7} \\
          & Qwen-3.6-plus    & \ofit{1.9}  & \ofit{8.3}  & \gen{0.8} \\
          & DeepSeek-v4-pro  & \ofit{5.5}  & \ofit{11.1} & \ofit{1.0} \\
          & Kimi-k2.6        & \ofit{2.6}  & \ofit{5.3}  & \ofit{1.2} \\
        \bottomrule
    \end{tabular}
    \caption{Overfitting severity $\Delta = \text{Test} - \text{Train}$. Negative values (red) denote overfitting, non-negative values (green) indicate the absence of overfitting. SkillOpt and Trace2Skill exhibit large negative gaps across all backbones, whereas SkillBoost remains near zero.}
    \label{tab:overfitting}
\end{table}
\subsection{Ablation Study}
\subsubsection{Exploitation Module}
The structured exploitation module is central to SkillBoost. It consists of workflow compliance verification, reasoning chain reconstruction, and root-cause clustering. We ablate each component to test the role of attribution-driven evolution. Table~\ref{tab:ablation_attribution_module} shows that removing any component consistently reduces performance across all models and benchmarks. The drop is especially clear on LiveMath, where exact-match scoring makes the final answer highly sensitive to reasoning errors. A wrong attribution can lead SkillBoost to edit the wrong step, causing the error to propagate through the chain. SpreadsheetBench and ALFWorld are less uniformly sensitive because their tasks provide more execution structure or environment feedback. The full model still performs best in every setting, showing that all three components are needed for reliable attribution.

\subsubsection{Anti-Regression Behavior of Verified Acceptance}
During skill self-evolution, a new version may break cases that the old version already solved. We call these \textit{regressions}. Verified acceptance limits regressions by design. Since $\texttt{Accept}(s')\iff r(s') > 0 \land \texttt{Regress}(s') < \epsilon$, a candidate is rejected whenever it breaks as many cases as it fixes.
This happens often in practice. For example, on SpreadsheetBench (Kimi-k2.6), a rule expansion from 87 to 150 lines fixes 20 cases but breaks 23 (net $-3$), so the gate rejects it. Table~\ref{tab:no_gate_ablation} shows the value of the gate. In the w/o Gate setting, each round accepts the candidate that repairs the most cases on the failure set, and skips back-testing on the full set. Such a candidate looks strong on the failures it targets, but the cases it breaks are never counted. As a result, accuracy drops on all models and benchmarks.

\subsection{Cross-Agent Skill Transfer}
We study whether skills optimized by SkillBoost transfer across agents and related benchmarks. Results are reported in Table~\ref{tab:cross-generalization}.

\subsubsection{Cross-Model Generalization}
Across four benchmarks and two target models, transferred skills consistently outperform human-written skills in all settings, achieving gains of 0.7 to 14.3 points. These results suggest that SkillBoost learns reusable skill structures that benefit agents beyond the source model, although the extent of transfer depends on the specific benchmark and target model.


\begin{table}[t!]
\centering
\small
\setlength{\tabcolsep}{2.8pt}
\renewcommand{\arraystretch}{1.1}
\begin{adjustbox}{width=\columnwidth}
\begin{tabular}{l ccc ccc ccc}
\toprule
\multirow{2}{*}{\textbf{Module}} 
& \multicolumn{3}{c}{\textbf{DeepSeek-v4-pro}}
& \multicolumn{3}{c}{\textbf{Qwen-3.7-max}}
& \multicolumn{3}{c}{\textbf{Kimi-k2.6}} \\
\cmidrule(lr){2-4}\cmidrule(lr){5-7}\cmidrule(lr){8-10}
& Sheet & Math & ALF & Sheet & Math & ALF & Sheet & Math & ALF \\
\midrule
-Workflow        & 68.6 & 29.6 & 51.8 & 73.3 & 25.6 & 77.2 & 61.4 & 42.4 & 60.8 \\
-Reasoning  & 64.3 & 27.2 & 49.8 & 74.2 & 21.6 & 74.8 & 57.9 & 37.6 & 53.8 \\
-RootCause  & 66.8 & 34.4 & 51.8 & 71.2 & 27.2 & 70.0 & 61.4 & 40.8 & 63.1 \\
\midrule
\rowcolor{gray!15}
\textbf{SkillBoost (full)} & \textbf{80.0} & \textbf{46.4} & \textbf{64.3}
& \textbf{77.9} & \textbf{36.0} & \textbf{82.0}
& \textbf{62.5} & \textbf{50.4} & \textbf{70.1} \\
\bottomrule
\end{tabular}
\end{adjustbox}
\caption{Ablation of SkillBoost modules across three models and three benchmarks: Sheet (SpreadsheetBench), Math (LiveMath), and ALF (ALFWorld). Each cell reports test accuracy (\%) after removing one module; the shaded row is the full SkillBoost. Lower means more important.}
\label{tab:ablation_attribution_module}
\end{table}

\begin{table}[t!]
\centering
\small
\setlength{\tabcolsep}{2.8pt}
\renewcommand{\arraystretch}{1.1}
\begin{adjustbox}{width=\columnwidth}
\begin{tabular}{l cc cc cc}
\toprule
\multirow{2}{*}{\textbf{Setting}}
& \multicolumn{2}{c}{\textbf{Claude-opus-4-6}}
& \multicolumn{2}{c}{\textbf{Kimi-k2.6}}
& \multicolumn{2}{c}{\textbf{DeepSeek-v4-pro}} \\
\cmidrule(lr){2-3}\cmidrule(lr){4-5}\cmidrule(lr){6-7}
& BFCL & Sheet & BFCL & Sheet & BFCL & Sheet \\
\midrule
w/o Gate & 37.7 & 75.7 & 17.0 & 59.3 & 36.3 & 67.5 \\
\rowcolor{gray!15}
\textbf{w/ Gate (ours)} & \textbf{48.5} & \textbf{82.5} & \textbf{20.8} & \textbf{62.5} & \textbf{44.6} & \textbf{80.0} \\
\bottomrule
\end{tabular}
\end{adjustbox}
\caption{Ablation on the verified acceptance gate. w/o Gate: the top candidate on the failure set is accepted without full-set back-testing. Removing the gate drops accuracy on all models and benchmarks.}
\label{tab:no_gate_ablation}
\end{table}

\subsubsection{Cross-Benchmark Generalization}
We further transfer the LiveMath skill optimized with Claude-opus-4-6 to OlympiadBench, which improves DeepSeek-v4-pro from 44.0 to 50.0 and Kimi-k2.6 from 46.0 to 60.0. The skill encodes guidance on problem parsing, strategy selection, derivation, verification, and answer formatting. These steps help organize model reasoning and reduce error propagation, which explains why a skill optimized on LiveMath can still improve performance on OlympiadBench.

\begin{table}[t!]
\centering
\small
\setlength{\tabcolsep}{2.5pt}
\renewcommand{\arraystretch}{1.1}
\begin{adjustbox}{width=\columnwidth}
\begin{tabular}{lll ccc}
\toprule
\rowcolor{subtabgray}
\multicolumn{5}{l}{\textbf{(a) Cross-model transfer}} \\
\midrule
\textbf{Benchmark} & \textbf{Source} & \textbf{Target} & \textbf{Human Skill} & \textbf{Transferred Skill}  \\
\midrule
\multirow{2}{*}{Spreadsheet} & \multirow{2}{*}{Claude-opus-4-6}
  & DeepSeek-v4-pro & 70.4 & 71.1\inc{+0.7}  \\
  & & Kimi-k2.6       & 50.7 & 53.5\inc{+2.8}  \\
\midrule
\multirow{2}{*}{LiveMath} & \multirow{2}{*}{Claude-opus-4-6}
  & DeepSeek-v4-pro & 24.8 & 31.2\inc{+6.4}  \\
  & & Kimi-k2.6       & 42.4 & 44.8\inc{+2.4} \\
\midrule
\multirow{2}{*}{BFCL-v4} & \multirow{2}{*}{Claude-opus-4-6}
  & DeepSeek-v4-pro & 20.7 & 35.0\inc{+14.3}  \\
  & & Kimi-k2.6       & 12.5 & 16.3\inc{+3.8}  \\
\midrule
\multirow{2}{*}{ALFWorld} & \multirow{2}{*}{Claude-opus-4-6}
  & DeepSeek-v4-pro & 54.5 & 55.5\inc{+1.0}  \\
  & & Kimi-k2.6       & 63.0 & 66.1\inc{+3.1}  \\
\midrule
\rowcolor{subtabgray}
\multicolumn{5}{l}{\textbf{(b) Cross-benchmark transfer}} \\
\midrule
\textbf{Source} & \textbf{Target} & \textbf{Model} & \textbf{Human Skill} & \textbf{Transferred Skill} \\
\midrule
\multirow{2}{*}{LiveMath} & \multirow{2}{*}{OlympiadBench}
  & DeepSeek-v4-pro & 44.0 & 50.0\inc{+6.0} \\
  & & Kimi-k2.6       & 46.0 & 60.0\inc{+14.0}  \\
\bottomrule
\end{tabular}
\end{adjustbox}
\caption{Skill transfer results. (a) Cross-model transfer: skills optimized by SkillBoost on the source model (Claude-opus-4-6) are transferred to target models. Human Skill denotes each target model with its human-written seed skill. (b) Cross-benchmark transfer: a skill optimized by SkillBoost on LiveMath is transferred to OlympiadBench.}
\label{tab:cross-generalization}
\end{table}

\begin{figure}[t!]
    \centering
    \includegraphics[width=0.482\textwidth]{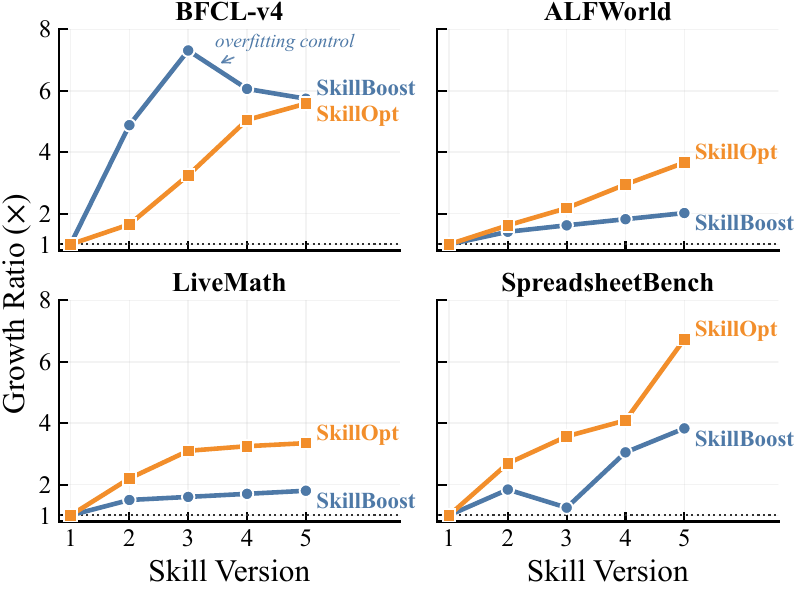}
    \caption{Skill-length growth across evolution versions. Each curve shows the token count of the evolved skill relative to its first version for SkillBoost and SkillOpt on four benchmarks.}
\label{fig:growth}
\end{figure}

\subsection{Discussion}
\subsubsection{Skill Growth and Overfitting Control}
Figure~\ref{fig:growth} tracks skill length during self-evolution. SkillOpt grows monotonically across all benchmarks; since each update absorbs rules fitted to training failures, unlimited accumulation makes the skill memorize case-specific patterns and overfit training traces. SkillBoost instead keeps skills compact, and on BFCL-v4 it actively prunes an over-grown skill, showing that its evolution loop acts as a regularizer that favors generalizable knowledge over raw accumulation.

\subsubsection{Cost Analysis}
We report two cost categories. The first is the \emph{optimization cost} of self-evolution. We control it with two-stage back-testing: all $N$ candidates are first screened on the previous-round failure set, and only the top-$2$ are evaluated on the full set. As a result, the cost grows linearly with $N$: $C(N)\approx N|\mathcal{F}|\tau_{\text{fail}}+2|\mathcal{D}|\tau_{\text{full}}=\beta N+\alpha$ (Figure~\ref{fig:bestofn}). SkillBoost and SkillOpt are comparable here, at 95K--154K tokens per round across benchmarks. The second is the \emph{evaluation cost} of running the optimized skill over the benchmark. This cost reaches 3--8M tokens per run and dominates the total inference cost (85--97\%). At deployment, SkillBoost reduces per-case inference tokens by 13.9\% on average compared with SkillOpt.

\subsubsection{Effect of the Candidate Pool Size $N$}
Figure~\ref{fig:bestofn} and Table~\ref{tab:bestofn} show how $N$ affects accuracy and token cost. Accuracy improves with $N$ but saturates quickly: increasing $N$ from 4 to 8 yields only 2.5 pp on SpreadsheetBench and 1.1 pp on ALFWorld, while token cost nearly doubles. This is consistent with Theorem~\ref{thm:finite}, which predicts that Best-of-$N$ progress grows sublinearly in $N$. We therefore set $N{=}4$ as the default, which captures most of the accuracy gain at moderate cost.

\begin{figure}[t!]
  \centering
    \includegraphics[width=0.995\linewidth]{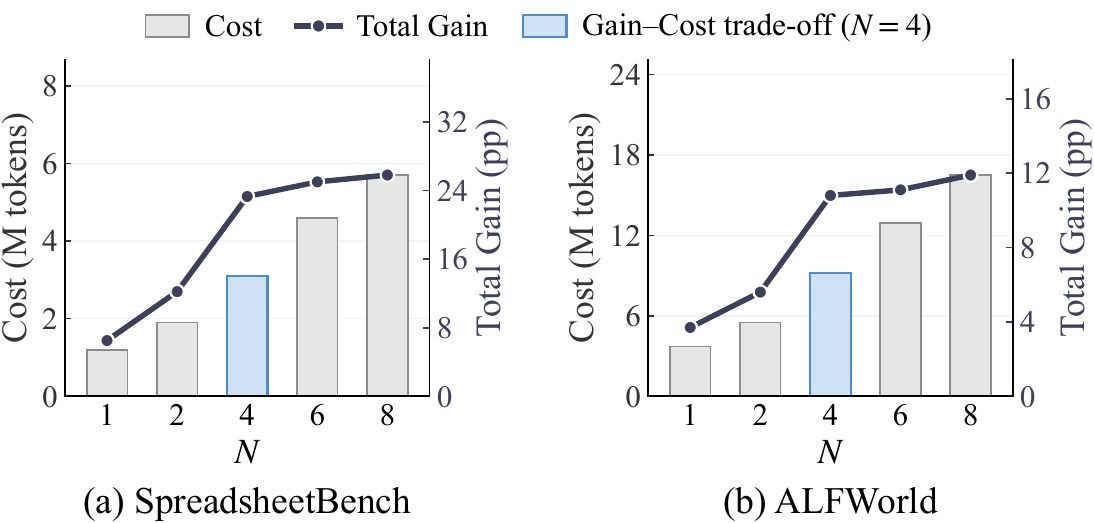}
    \caption{Best-of-$N$ results. Token Cost (bars, left axis) vs. Total Gain (line, right axis) as $N$ grows.}
    \label{fig:bestofn}
\end{figure}

\begin{table}[t!]
 \small
  \setlength{\tabcolsep}{8pt}
  \centering
  \renewcommand{\arraystretch}{1.1}
  \begin{adjustbox}{width=\columnwidth}
  \begin{tabular}{l ccccc}
    \toprule
    \multirow{2}{*}{\textbf{Dataset}} & \multicolumn{5}{c}{\textbf{Final Acc. by $N$}} \\
    \cmidrule(lr){2-6}
     & $N{=}1$ & $N{=}2$ & \cellcolor{gray!15}$\mathbf{N{=}4}$ & $N{=}6$ & $N{=}8$ \\
    \midrule
    SpreadsheetBench & 61.1 & 66.8 & \cellcolor{gray!15}\textbf{77.9} & 79.6 & 80.4 \\
    ALFWorld         & 74.9 & 76.8 & \cellcolor{gray!15}\textbf{82.0} & 82.3 & 83.1 \\
    \bottomrule
  \end{tabular}
  \end{adjustbox}
  \caption{Best-of-$N$ results. Accuracy improves as the candidate size $N$ increases, but the gain saturates after $N=4$. The shaded column denotes our setting.}
  \label{tab:bestofn}
\end{table}

\section{Conclusion}
In this work, we present \textbf{SkillBoost}, a skill self-evolution framework that mitigates overfitting by treating skill optimization as an exploration--exploitation trade-off rather than trajectory fitting. SkillBoost uses \emph{structured exploitation} to localize repair targets, \emph{prior-guided exploration} to generate diverse candidate edits, and \emph{verified acceptance} to commit only updates that improve performance without excessive regression. Across 23 model--benchmark configurations, SkillBoost outperforms human-crafted and LLM-generated skills, reduces overfitting, and produces skills that transfer well across agents. Like other self-evolution frameworks, SkillBoost depends on the ability of the base model to attribute failures, generate meaningful edits, and evaluate candidates, so weaker models may limit the quality of self-evolution.

\bibliographystyle{plain}
\bibliography{references}

\onecolumn
\newpage
\appendix
\setcounter{lemma}{0}
\setcounter{theorem}{0}
\section{Case Study: What Do Evolved Skills Actually Learn?}
\label{sec:case_study}

To understand what the self-evolution process actually discovers, we extract the representative learned rules from the best-performing skill version on each benchmark. Each rule below is quoted verbatim from the deployed \texttt{SKILL.md} and reveals the procedural discipline that frontier models do not apply zero-shot.

\subsection{Representative Learned Rules}

\noindent\textbf{BFCL (Function Calling).} 
\begin{quote}
\textit{``Read ALL conversation turns before calling, use EXACT function names with zero substitutions, execute directly without exploration, and never repeat calls or use exploration/auth functions.''}
\end{quote}
The evolved skill encodes a direct-execution discipline: parse the full multi-turn context first, match function names character-by-character against the available list, and immediately invoke the target function without exploratory calls (e.g., \texttt{ls}, \texttt{cd}, \texttt{get\_booking\_history}). This eliminates the redundant exploration pattern that zero-shot models exhibit when faced with function calling tasks.

\noindent\textbf{LiveMath (Theorem-Grounded MCQs).}
\begin{quote}
\textit{``Perform bidirectional verification on each candidate: check if it requires ungiven assumptions (over-strong) or drops supported characterizations (over-weak), then select the unique option that exactly matches the theorem's hypotheses.''}
\end{quote}
The skill discovers a symmetric verification protocol rather than a directional bias toward stronger or weaker statements. Each candidate undergoes upward verification (does it need extra assumptions?) and downward verification (does it drop supported equality cases or equivalences?). The unique option passing both checks is selected. 

\noindent\textbf{SpreadsheetBench (Excel Manipulation).}
\begin{quote}
\textit{``Inspect workbook structure, compute all logic in Python and write literal values to answer cells (NEVER formula strings), leave existing formula cells untouched, and verify with \texttt{data\_only=True} reopen.''}
\end{quote}
The core insight is a static-value discipline: \texttt{openpyxl} does not evaluate formulas, so writing ``\texttt{=SUM(A2:C2)}'' reads back as \texttt{None} when the grader reopens with \texttt{data\_only=True}. The skill learns to distinguish answer cells (write computed literals) from existing formula cells (leave untouched) and to self-check by reloading the output workbook.

\noindent\textbf{ALFWorld (Embodied Household Tasks).}
\begin{quote}
\textit{``Keep a horizon-aware visited/frontier ledger via deterministic sweep protocol, advance monotonically without relying on amnesiac memory, open containers for food/cupboard items, and apply state transformations exactly once.''}
\end{quote}
The skill encodes an amnesia-robust exploration strategy: since the agent only sees the most recent 2-step history, it cannot rely on semantic memory. Instead, it treats the admissible action list (which always contains all \texttt{go to X} options) as a complete room map and performs a deterministic sweep in lexicographic order, advancing monotonically without backtracking. The skill also discovers that food and cup items are frequently hidden inside closed containers (\texttt{fridge}, \texttt{microwave}, \texttt{cabinet}), requiring explicit \texttt{open} actions.

\noindent\textbf{DocVQA (Document Visual QA).}
\begin{quote}
\textit{``First bind the question to the exact visual field label, then copy the minimal complete text span with strict spacing discipline (default tight for abbreviations/currency/\%), preserving attached symbols and avoiding category-prefix hallucination.''}
\end{quote}
The learned rule enforces a precision-first extraction protocol: bind the question to the nearest field label, extract the shortest complete text span, and apply strict spacing rules (e.g., \texttt{R.W. Engel} not \texttt{R. W. Engel}, \texttt{\$15,000.00} not \texttt{\$ 15,000.00}). The skill also discovers that bare numeric values should not be prefixed with category words (e.g., answer \texttt{7} not \texttt{Table 7} when asked ``what is the table number'').

\subsection{Cross-Benchmark Patterns}

Table~\ref{tab:learned_rules_summary} summarizes the learned rules across all five benchmarks. Several patterns emerge:

\begin{table*}[t!]
\centering
\resizebox{\textwidth}{!}{
\begin{tabular}{@{}l p{4.5cm} p{6cm} p{3cm}@{}}
\toprule
\textbf{Benchmark} & \textbf{Core Discipline} & \textbf{Learned Rule (Abbreviated)} & \textbf{Key Mechanism} \\
\midrule
BFCL-v4 & Direct execution & Read all turns; exact function names; no exploration & Context aggregation + name matching \\
LiveMath & Bidirectional verification & Check over-strong \& over-weak; select exact match & Symmetric hypothesis verification \\
SpreadsheetBench & Static-value discipline & Compute in Python; write literals; never formulas & Tool limitation awareness \\
ALFWorld & Amnesia-robust sweep & Deterministic sweep; monotonic advance; open containers & Memory-compensating exploration \\
DocVQA & Precision extraction & Bind to label; copy minimal span; strict spacing & Evidence binding + formatting \\
\bottomrule
\end{tabular}
}
\caption{Representative learned rules extracted from the best skill version on each benchmark.}
\label{tab:learned_rules_summary}
\end{table*}

\begin{enumerate}
    \item \textbf{Procedural, not instance-specific.} Every rule is a general procedure applicable to all instances within the benchmark, not a pattern learned from specific examples.
    
    \item \textbf{Discipline over knowledge.} The skills encode forms of discipline (answer formatting, evidence binding, search-frontier management, function call syntax) rather than domain knowledge. These are behavioral constraints that frontier models fail to apply consistently zero-shot.
    
    \item \textbf{Tool and environment awareness.} Multiple skills demonstrate explicit awareness of tool limitations: SpreadsheetBench knows \texttt{openpyxl} cannot evaluate formulas; ALFWorld knows the agent is amnesiac and compensates with a deterministic protocol; BFCL knows the harness requires exact \texttt{<function\_calls>} XML tags.
    
    \item \textbf{Failure-driven refinement.} The LiveMath rule evolved from a directional bias (``prefer conservative'') to symmetric verification after the directional heuristic caused a $-4$ percentage-point regression, illustrating how the evolution loop corrects directionally wrong preferences.
\end{enumerate}

These findings suggest that self-evolution discovers procedural disciplines that bridge the gap between a model's latent capability and its zero-shot application consistency. The learned rules are not new knowledge but rather systematic application protocols that the model already possesses but fails to deploy reliably without explicit instruction.

\section{Repair Brief: Illustrative Example}
\label{app:mutation_brief_example}

\begin{figure*}[t!]
    \centering
    \includegraphics[width=\textwidth]{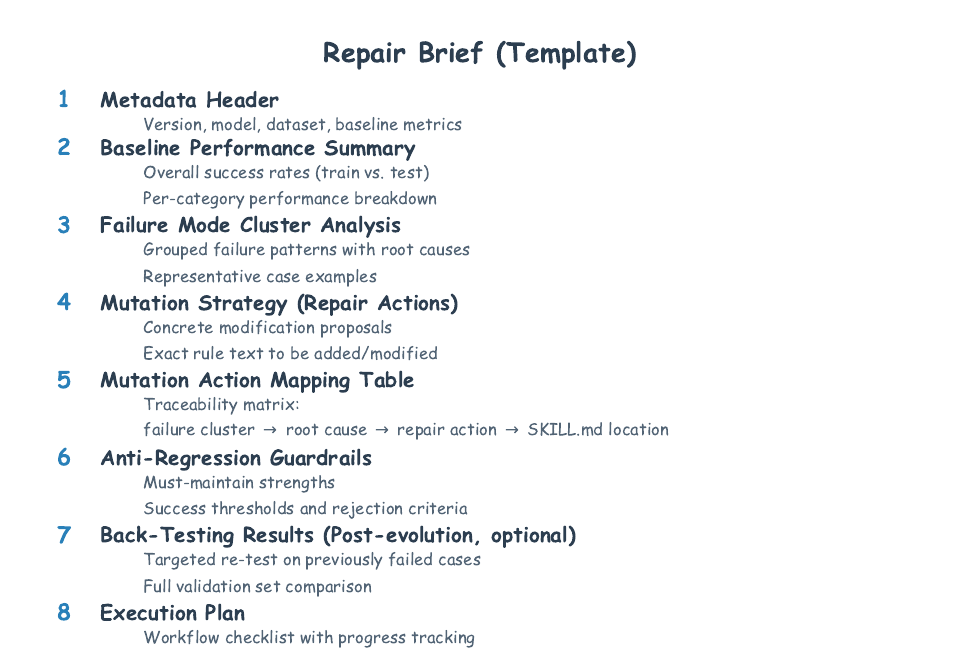}
    \caption{Repair Brief (Template): A structured diagnostic document with 8 modules for skill evolution.}
    \label{fig:mutation_brief_template}
\end{figure*}

A \emph{Repair Brief} $b_t^{(n)}$ pairs the shared diagnosis with one such strategy: $b_t^{(n)} = (g_t,\, \pi^{(n)}), n = 1, \ldots, N$. Each brief maps to a unique edit plan and produces one candidate. Figure~\ref{fig:mutation_brief_template} presents the eight-module template that governs every skill repair cycle. Below we provide a concrete, annotated example drawn from the LiveMath benchmark (Qwen-3.7-max, $v_1 \to v_2$) to illustrate the textual content of each module.

\subsection*{1. Metadata Header}

\begin{quote}
\small
\textsf{Source version:} $v_1$ (seed skill) \\
\textsf{Target version:} $v_2$ \\
\textsf{Model:} Qwen-3.7-max \\
\textsf{Dataset:} LiveMath train split (35 theorem-grounded multiple-choice questions, 5 options A--E) \\
\textsf{Baseline:} $v_1$ accuracy = 22.9\% (8/35), random baseline $\approx$ 20\%
\end{quote}

\subsection*{2. Baseline Performance Summary}

\begin{quote}
\small
\textsf{Per-category breakdown:}
\begin{itemize}[nosep]
    \item Analysis/Proof questions: 2/12 (16.7\%)
    \item Computation questions: 4/15 (26.7\%)
    \item Conceptual questions: 2/8 (25.0\%)
\end{itemize}
\textsf{Observation:} Performance is near random chance across all categories. Error analysis reveals two systematic failure modes induced by skill wording rather than model incapacity.
\end{quote}

\subsection*{3. Failure Mode Cluster Analysis}

\begin{quote}
\small
\noindent\textbf{Cluster A --- Premature Answer without Reasoning.} \\
\textit{Root cause:} The $v_1$ output-format section instructs ``place only the option label inside the \texttt{<answer>} tag, no explanations.'' This cleanliness constraint was over-generalized by the model into ``do not explain at all,'' causing it to emit bare \texttt{<answer>X</answer>} responses (18 characters) without any intermediate reasoning. On frontier-level math, skipping reasoning degrades to random guessing. \\
\textit{Representative trace:}
\begin{verbatim}
  Input:  "Let f be a C^1 diffeomorphism... Which statement is correct?"
  Output: "<answer>D</answer>"   (18 chars, zero reasoning)
  Gold:   A
\end{verbatim}

\medskip
\noindent\textbf{Cluster B --- ``Strongest Statement'' Bias (11 errors out of 15 affected cases).} \\
\textit{Root cause:} The $v_1$ selection rule instructs ``prefer the strongest statement supported by the hypotheses.'' The model memorized ``strongest'' while dropping the constraint ``exactly supported,'' systematically selecting over-strong conclusions that require additional assumptions (e.g., uniform bounds) not provided by the problem statement. \\
\textit{Representative trace:}
\begin{verbatim}
  Input:  "...Which bound does the theorem guarantee?"
  Output: "...D provides a stronger uniform lower bound...
           Therefore, D is the strongest correct statement."
  Pred:   D    Gold:   A
\end{verbatim}
The correct answer A is the statement \textit{exactly matching} the theorem's hypotheses, not the literally strongest one.
\end{quote}

\subsection*{4. Repair Strategy (Repair Actions)}

\begin{quote}
\small
\noindent\textbf{Action 1 (addressing Cluster A):} Introduce a hard reasoning gate. Modify the output-format section to explicitly separate a ``reasoning zone'' (mandatory step-by-step analysis of each option) from an ``answer zone'' (the final \texttt{<answer>} tag). Add rule: ``The \texttt{<answer>} tag must not appear until all options have been individually evaluated.'' \\[4pt]
\textit{Exact rule text to be added:}
\begin{quote}
``Before outputting \texttt{<answer>}, you must produce a complete analysis that examines every option and identifies the specific theorem or counterexample that supports or refutes it. An answer tag without preceding reasoning is invalid.''
\end{quote}

\medskip
\noindent\textbf{Action 2 (addressing Cluster B):} Replace the ``strongest statement'' heuristic with an ``exact-fit'' criterion. Rewrite the selection principle to: ``Select the unique option that is exactly entailed by the given hypotheses---neither weaker (drops supported characterizations) nor stronger (requires assumptions not stated).'' \\[4pt]
\textit{Exact rule text to be modified:}
\begin{quote}
``For each candidate option, perform bidirectional verification: (i) upward check --- does this option require assumptions beyond those stated? If yes, it is over-strong; reject. (ii) downward check --- does this option drop an equality case or equivalence supported by the theorem? If yes, it is over-weak; reject. Select the unique option passing both checks.''
\end{quote}
\end{quote}

\subsection*{5. Repair Action Mapping Table}

\begin{quote}
\small
\begin{tabular}{@{}llll@{}}
\toprule
\textbf{Failure Cluster} & \textbf{Root Cause} & \textbf{Repair Action} & \textbf{SKILL.md Location} \\
\midrule
A: Premature answer & Reasoning gate absent & Add mandatory reasoning zone & \texttt{\S7 Output Format} \\
B: Strongest bias & Over-strong selection rule & Replace with exact-fit criterion & \texttt{\S2 Selection Principle} \\
B: Strongest bias & Missing over-strength check & Add bidirectional verification & \texttt{\S3 Verification Protocol} \\
\bottomrule
\end{tabular}
\end{quote}

\subsection*{6. Anti-Regression Guardrails}

\begin{quote}
\small
\textsf{Must-maintain strengths:}
\begin{itemize}[nosep]
    \item The $v_1$ skill's 8 correctly solved cases must not regress.
    \item Output format compliance: all responses must contain exactly one \texttt{<answer>} tag.
\end{itemize}
\textsf{Rejection criteria:}
\begin{itemize}[nosep]
    \item If train accuracy drops below $v_1$ baseline (22.9\%) $\to$ reject mutation.
    \item If regression count $\geq$ repaired cases $\to$ reject mutation.
    \item If any response violates the new reasoning-gate format $\to$ flag for manual review.
\end{itemize}
\end{quote}

\subsection*{7. Back-Testing Results (Post-evolution)}

\begin{quote}
\small
\textsf{Targeted re-test on previously failed cases:}
\begin{itemize}[nosep]
    \item Cluster A (9 errors): all 9 now produce multi-paragraph reasoning; 2 of 9 reach the correct answer.
    \item Cluster B (11 errors): 4 of 11 now apply bidirectional verification and select the exact-fit option.
\end{itemize}
\textsf{Comparison:}
\begin{itemize}[nosep]
    \item Pre-repair ($v_1$): 8/35 correct (22.9\%)
    \item Post-repair ($v_2$): 13/35 correct (37.1\%)
    \item Net gain: $+5$ cases, $+14.2$ percentage points
\end{itemize}
\textsf{Regression check:} 1 previously correct case regressed (option-level tie-breaking ambiguity). Regression count = 1 < repaired count=6 $\to$ \textbf{accepted}.
\end{quote}

\subsection*{8. Execution Plan}

\begin{quote}
\small
\centering
\begin{tabular}{@{}ll@{}}
\toprule
\textbf{Step} & \textbf{Status} \\
\midrule
Collect failure trajectories from $v_1$ evaluation & \checkmark \\
Cluster failures and identify root causes & \checkmark \\
Draft repair actions per cluster & \checkmark \\
Generate $v_2$ SKILL.md via guided editing & \checkmark \\
Back-test $v_2$ on full training set & \checkmark \\
Verify anti-regression guardrails & \checkmark \\
Accept or reject $v_2$ & \checkmark\ (accepted) \\
\bottomrule
\end{tabular}
\end{quote}
This example illustrates how each module of the repair brief ensures traceability from observed failures to root causes to concrete rule edits, with explicit guardrails preventing skill degradation across evolution steps.

\section{Proofs}
The optimization problem in Equation~\eqref{eq:max_n} can be formalized as a constrained optimization problem. Let $G(x, x_t) = F(x_t) - F(x)$ and let $D(\cdot, \cdot)$ be a distance metric satisfying $D(x, x_t) > 0$ for all $x \neq x_t$. The update rule is:
\begin{equation}
\small
    x_{t+1} = \arg\min_{x} \; G(x, x_t) 	\quad \text{s.t.} \quad 
    \begin{cases}
        F(x_t) - F(x) < 0, \\
        D(x, x_t) \leq \epsilon,
    \end{cases}
\end{equation}
At each step, there are $N$ candidate points $\{x^{(1)}, \ldots, x^{(N)}\}$ to choose from within the $\epsilon$-ball around $x_t$. Let $x^*$ denote the target skill configuration and let $\ell=\|x^*-x_0\|$ be the distance from the initial skill to the target. We define the \emph{effective progress} at step $t$ as
\begin{equation}
    \delta_t = \max_{i=1,\ldots,N} \langle x^{(i)}-x_t, u_t\rangle,
\end{equation}
where $u_t = \frac{x^*-x_t}{\|x^*-x_t\|}$ is the unit vector pointing toward the target. In the continuous case, one can always move a distance $\epsilon$ along the target direction. With only $N$ discrete candidates, however, none may align perfectly with this direction. The following lemma quantifies the resulting loss in effective progress.

\begin{lemma}[Effective Step Size under Uniform Sampling]
\label{lem:effective_step}
Suppose the $N$ candidates are sampled independently and uniformly from $\mathcal{B}(x_t,\epsilon)\subset\mathbb{R}^d$. For large $d$ and moderate $N$, the expected effective progress satisfies
\begin{equation}
    \mathbb{E}[\delta_t]
    \approx
    \epsilon\sqrt{\frac{2\ln N}{d}}.
\end{equation}
\end{lemma}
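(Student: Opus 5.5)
We write each candidate as $x^{(i)} = x_t + \epsilon\, r_i\, v_i$, where $v_i$ is uniform on the unit sphere $S^{d-1}$ and $r_i\in[0,1]$ has density $d r^{d-1}$, independent of $v_i$. This is the standard polar decomposition of the uniform law on $\mathcal{B}(x_t,\epsilon)$. The projection is then $\langle x^{(i)}-x_t,u_t\rangle = \epsilon\, r_i\, \langle v_i,u_t\rangle$. By rotational invariance, the law of $\langle v_i,u_t\rangle$ does not depend on $u_t$, so we may take $u_t=e_1$ and study $Z_i := \sqrt{d}\, r_i\, v_{i,1}$. The claim reduces to showing that $\mathbb{E}[\max_{i\le N} Z_i]\approx\sqrt{2\ln N}$ in the regime of large $d$ and moderate $N$.

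The first step is a Gaussian approximation of a single coordinate. It is classical (Poincar\'e--Borel) that $\sqrt{d}\,v_{i,1}\Rightarrow\mathcal{N}(0,1)$ as $d\to\infty$. Its exact density is proportional to $(1-s^2/d)^{(d-3)/2}$, which converges to $e^{-s^2/2}$ uniformly on compacts and is dominated by a Gaussian tail. Meanwhile $r_i$ concentrates at $1$: $\mathbb{E}[r_i]=d/(d+1)$ and $1-r_i=O_P(1/d)$. So $Z_i\Rightarrow \mathcal{N}(0,1)$, and the $Z_i$ are i.i.d. I would state this as $Z_i \overset{d}{\approx} g_i$ with $g_i$ i.i.d.\ standard normals. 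The radial factor only costs a multiplicative $1-O(1/d)$.

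The second step is the maximum of $N$ Gaussians. The upper bound $\mathbb{E}[\max_i g_i]\le\sqrt{2\ln N}$ follows from Jensen applied to $\exp(\lambda \max_i g_i)\le\sum_i e^{\lambda g_i}$, which gives $\mathbb{E}\max\le \ln N/\lambda+\lambda/2$; we then optimize with $\lambda=\sqrt{2\ln N}$. For the matching lower bound, $\mathbb{E}[\max_i g_i]=\sqrt{2\ln N}\,(1-o(1))$, I would use the Mills-ratio estimate $P(g>\sqrt{2(1-\eta)\ln N})\ge N^{-(1-\eta)}/\mathrm{poly}(\ln N)$. This gives $P(\max<\sqrt{2(1-\eta)\ln N})\le(1-N^{-(1-\eta)+o(1)})^N\to0$. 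The negative part of the maximum contributes $O(1)$ at most, since $\mathbb{E}|g|<\infty$. Combining both bounds, $\mathbb{E}[\delta_t]=\epsilon\,\mathbb{E}[\max_i Z_i]/\sqrt{d}\approx\epsilon\sqrt{2\ln N/d}$.

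The main obstacle is making the interchange of "large $d$" and "maximum over $N$" honest. The Gaussian approximation must hold in the tail at level $s\approx\sqrt{2\ln N}$, not just in distribution. I would handle this by comparing densities directly. Using $\ln(1-s^2/d)= -s^2/d - O(s^4/d^2)$, the exact marginal density equals the Gaussian density times $1+O((1+s^4)/d)$, uniformly for $s\le\sqrt{2\ln N}$, provided $(\ln N)^2\ll d$. This is precisely what "moderate $N$" should mean. Beyond that level, we have the exact upper bound $|Z_i|\le\sqrt{d}$ and a sub-Gaussian tail $P(\sqrt d\,v_{i,1}>s)\le e^{-s^2/2}$ from concentration on the sphere, which keeps the upper-bound argument valid with no approximation at all. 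Accordingly, I would present the $\le$ direction rigorously and the $\ge$ direction up to $1+o(1)$ factors, which justifies the "$\approx$" in the statement.
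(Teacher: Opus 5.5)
Your proposal is correct and follows the paper's route: reduce by rotational invariance to the one-dimensional projection (variance $\epsilon^2/(d+2)\approx\epsilon^2/d$), replace it by a Gaussian for large $d$, and use $\mathbb{E}[\max_{i\le N} g_i]\approx\sqrt{2\ln N}$. Your version makes the paper's heuristic steps quantitative, through the polar decomposition, explicit two-sided bounds on the Gaussian maximum, and a tail-level density comparison that pins down ``moderate $N$'' as $(\ln N)^2\ll d$. One small tightening: the tail bound $P(\sqrt d\,v_{i,1}>s)\le e^{-s^2/2}$ alone gives the upper bound only up to a lower-order additive term, so ``no approximation at all'' needs a different justification. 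The exact inequality $\mathbb{E}[\max_i Z_i]\le\sqrt{2\ln N}$ follows from the moment comparison $\mathbb{E}[Z_i^{2k}]\le(2k-1)!!$ (odd moments vanish), which gives $\mathbb{E}[e^{\lambda Z_i}]\le e^{\lambda^2/2}$, and then your Jensen argument goes through verbatim.
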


\begin{proof}
For each candidate, define its projection onto the target direction as
\begin{equation}
    Z_i
    =
    \langle x^{(i)}-x_t,u_t\rangle .
\end{equation}
Since the candidates are sampled uniformly from $\mathcal{B}(x_t,\epsilon)$, the translated vectors $x^{(i)}-x_t$ are uniformly distributed over $\mathcal{B}(0,\epsilon)$. By symmetry and rotational invariance, the distribution of $Z_i$ does not depend on the particular direction $u_t$, and
\begin{equation}
    \mathbb{E}[Z_i]=0,
    \qquad
    \mathrm{Var}(Z_i)=\frac{\epsilon^2}{d+2}.
\end{equation}

For large $d$, the one-dimensional projection of a uniformly sampled point in the $d$-dimensional ball is well approximated by a Gaussian random variable. Since $\mathrm{Var}(Z_i)=\epsilon^2/(d+2)\to\epsilon^2/d$ as $d\to\infty$, we take
\begin{equation}
    Z_i
    \approx
    \mathcal{N}\left(0,\frac{\epsilon^2}{d}\right).
\end{equation}

The effective progress is the maximum projection among the $N$ candidates:
\begin{equation}
    \delta_t
    =
    \max_{i=1,\ldots,N} Z_i .
\end{equation}
By the classical extreme-value approximation for $N$ independent Gaussian variables with variance $\sigma^2$,
\begin{equation}
    \mathbb{E}\left[\max_{i=1,\ldots,N} Z_i\right]
    \approx
    \sigma\sqrt{2\ln N}.
\end{equation}
Substituting $\sigma=\epsilon/\sqrt{d}$ gives
\begin{equation}
    \mathbb{E}[\delta_t]
    \approx
    \epsilon\sqrt{\frac{2\ln N}{d}},
\end{equation}
which proves the claim.
\end{proof}

Lemma~\ref{lem:effective_step} shows that the effective step size scales as $\epsilon\sqrt{2\ln N/d}$, which is strictly less than the continuous optimum $\epsilon$ whenever $N < e^{d/2}$. A natural follow-up question is: given this reduced per-step progress, how many iterations are required to reach the target? The following theorem provides a lower bound.

\begin{theorem}[Iteration Complexity under Discrete Candidates]
\label{thm:main}
Under the uniform sampling assumption of Lemma~\ref{lem:effective_step}, suppose that candidates are independently resampled at each iteration. For a tolerance radius $\eta\geq 0$, define the first hitting time of the $\eta$-neighborhood of the target as
\begin{equation}
    T_\eta
    =
    \inf\{t\geq 0:\|x_t-x^*\|\leq \eta\}.
\end{equation}
Let
\begin{equation}
    \mu_N = \mathbb{E}[\delta_t]
\end{equation}
denote the expected effective progress under the sampling model. If $\mathbb{E}[T_\eta]<\infty$, then
\begin{equation}
    \mathbb{E}[T_\eta]
    \geq
    \frac{\ell-\eta}{\mu_N}.
\end{equation}
Using the approximation in Lemma~\ref{lem:effective_step}, this gives the approximate scaling
\begin{equation}
    \mathbb{E}[T_\eta]
    \approx
    \frac{\ell-\eta}{\epsilon}
    \sqrt{\frac{d}{2\ln N}}.
\end{equation}
\end{theorem}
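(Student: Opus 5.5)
The plan is to track the distance-to-target process $R_t=\|x_t-x^*\|$ and show that each iteration can shrink it by at most the effective progress $\delta_t$ (more precisely its positive part). Summing up to the hitting time $T_\eta$ and applying Wald's identity then converts this pathwise bound into the stated bound on $\mathbb{E}[T_\eta]$. The approximate scaling afterwards follows by substituting Lemma~\ref{lem:effective_step} for $\mu_N$.

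\textbf{Step 1 (one-step contraction bound).} Since $u_t$ is a unit vector, Cauchy--Schwarz gives
$R_{t+1}=\|x^*-x_{t+1}\|\geq\langle x^*-x_{t+1},u_t\rangle=R_t-\langle x_{t+1}-x_t,u_t\rangle .$
By the update rule, $x_{t+1}$ is either one of the $N$ candidates $x^{(i)}$ or, if no candidate passes the gate, $x_t$ itself. Hence $\langle x_{t+1}-x_t,u_t\rangle\le\max(\delta_t,0)=:\delta_t^+$, and so $R_t-R_{t+1}\le\delta_t^+$. The acceptance constraint $F(x)>F(x_t)$ can only restrict the choice, so it never helps beat this bound.

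\textbf{Step 2 (telescoping to the hitting time).} Because $\eta<\ell=R_0$, we have $T_\eta\geq1$. At $T_\eta$ we have $R_{T_\eta}\le\eta$, so telescoping gives the pathwise inequality
$\ell-\eta\;\le\;R_0-R_{T_\eta}\;\le\;\sum_{t=0}^{T_\eta-1}\delta_t^+ .$

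\textbf{Step 3 (Wald's identity).} Let $\mathcal{F}_t$ be the history up to the choice of $x_t$. The candidates at step $t$ are freshly resampled, uniform on $\mathcal{B}(x_t,\epsilon)$, and independent of $\mathcal{F}_t$. By the rotational invariance already used in the proof of Lemma~\ref{lem:effective_step}, the law of $\delta_t$ given $\mathcal{F}_t$ does not depend on $x_t$ or $u_t$. Thus the $\delta_t^+$ are i.i.d.\ with common mean $\mu_N^+=\mathbb{E}[\delta_t^+]$, and each is independent of $\mathcal{F}_t$. The event $\{T_\eta>t\}$ lies in $\mathcal{F}_t$, so $T_\eta$ is a stopping time. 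With $\mathbb{E}[T_\eta]<\infty$ and $0\le\delta_t^+\le\epsilon$ bounded, Wald's identity yields $\mathbb{E}\big[\sum_{t<T_\eta}\delta_t^+\big]=\mu_N^+\,\mathbb{E}[T_\eta]$. Taking expectations in Step 2 then gives $\mathbb{E}[T_\eta]\ge(\ell-\eta)/\mu_N^+$.

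\textbf{Main obstacle.} I expect the delicate point to be the gap between $\mu_N^+$ and the quantity $\mu_N=\mathbb{E}[\delta_t]$ appearing in the statement. Either I will interpret $\mu_N$ as the expected progress of the actually accepted step, which is nonnegative by construction, or I will argue that the two agree in the regime of interest. For the second route, $\mathbb{P}(\delta_t<0)=2^{-N}$, and $\mathbb{E}[\delta_t^-]$ is at most about $2^{-N}\epsilon/\sqrt d$. This is negligible relative to $\epsilon\sqrt{2\ln N/d}$, so $\mu_N^+\approx\mu_N$ within the same approximation already used in Lemma~\ref{lem:effective_step}.

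Substituting $\mu_N\approx\epsilon\sqrt{2\ln N/d}$ from Lemma~\ref{lem:effective_step} then gives the claimed scaling $\mathbb{E}[T_\eta]\approx\frac{\ell-\eta}{\epsilon}\sqrt{d/(2\ln N)}$. This is exactly the second display in Theorem~\ref{thm:finite}.
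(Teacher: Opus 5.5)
Your skeleton (Cauchy--Schwarz one-step bound, telescoping to $T_\eta$, then the tower property / Wald at a stopping time) is the same as the paper's. The problem is how you handle rejected rounds: it leaves a gap that neither of your proposed routes can close. Because you allow $x_{t+1}=x_t$, you only get $R_t-R_{t+1}\le\delta_t^+$, and hence $\mathbb{E}[T_\eta]\ge(\ell-\eta)/\mu_N^+$ with $\mu_N^+=\mathbb{E}[\delta_t^+]\ge\mu_N$. That is strictly weaker than the stated bound. Route (1) changes the theorem's definition of $\mu_N$; moreover, the accepted step's progress has a conditional mean that depends on $F$ and $x_t$, so Wald would not give a constant anyway. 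Route (2) only gives $\mu_N^+\approx\mu_N$. That suffices for the $\approx$ scaling, but not for the exact inequality.

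In fact, in your model the exact inequality is false. Take $d=1$, $x^*=0$, $F(x)=-|x|$, $\eta=\epsilon$, $N=2$. While $x_t>\eta$, the accepted step decreases the distance by exactly $\delta_t^+$, with $\mu_2=\epsilon/3$ and $\mu_2^+=5\epsilon/12$. Your own Wald argument then gives $\mathbb{E}[T_\eta]<(\ell-\eta+\epsilon)/\mu_2^+$. This is below $(\ell-\eta)/\mu_2$ as soon as $\ell-\eta\ge 4\epsilon$. With $N=1$ it is starker still: $\mu_1=0$ while $\mathbb{E}[T_\eta]<\infty$.

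The paper avoids this by modeling $x_{t+1}$ as always one of the $N$ freshly sampled candidates ("$x_{t+1}$ is selected from the sampled candidate set"). Then $\langle v_t,u_t\rangle\le\delta_t$ holds directly, with $\delta_t$ allowed to be negative. Your Step 3 goes through unchanged with $\delta_t$ in place of $\delta_t^+$. Wald's identity does not need nonnegativity, only $|\delta_t|\le\epsilon$ and $\mathbb{E}[T_\eta]<\infty$, and it yields $\mathbb{E}\bigl[\sum_{t<T_\eta}\delta_t\bigr]=\mu_N\,\mathbb{E}[T_\eta]$ exactly.

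Your observation is a fair criticism of that modeling step. The algorithm does keep $s_t$ when no candidate passes the gate, and the paper's proof glosses over this. But in that more faithful model, the correct exact statement is the $\mu_N^+$ version you proved. You should present it as such, using your $2^{-N}\epsilon/\sqrt{d}$ estimate to show the approximate scaling is unchanged, rather than claim the $\mu_N$ bound.
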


\begin{proof}
Let
\begin{equation}
    r_t=\|x^*-x_t\|,
    \qquad
    v_t=x_{t+1}-x_t .
\end{equation}
For all $t<T_\eta$, we have $r_t>\eta$, and in particular $r_t>0$ when $\eta\geq 0$ and $x_t\neq x^*$. Hence
\begin{equation}
    u_t
    =
    \frac{x^*-x_t}{\|x^*-x_t\|}.
\end{equation}
By the Cauchy--Schwarz inequality (equivalently, $\|w\|\geq\langle w,\hat u\rangle$ for any unit vector $\hat u$),
\begin{equation}
    r_{t+1}
    =
    \|x^*-x_t-v_t\|
    \geq
    \langle x^*-x_t-v_t,\,u_t\rangle
    =
    r_t-\langle v_t,u_t\rangle .
\end{equation}
Therefore, the decrease in distance to the target is upper bounded by the progress along the target direction:
\begin{equation}
    r_t-r_{t+1}
    \leq
    \langle v_t,u_t\rangle .
\end{equation}
Since $x_{t+1}$ is selected from the sampled candidate set and $\delta_t$ is the maximum projection among all sampled candidates,
\begin{equation}
    \langle v_t,u_t\rangle
    \leq
    \delta_t .
\end{equation}
Combining the two inequalities gives
\begin{equation}
    r_t-r_{t+1}
    \leq
    \delta_t .
\end{equation}

By the definition of $T_\eta$, the process first enters the $\eta$-neighborhood of the target at time $T_\eta$, so $r_{T_\eta}\leq \eta$ and $r_0=\ell$. Summing the previous inequality over $t=0,\ldots,T_\eta-1$ yields
\begin{equation}
    \ell-\eta
    \leq
    r_0-r_{T_\eta}
    =
    \sum_{t=0}^{T_\eta-1}(r_t-r_{t+1})
    \leq
    \sum_{t=0}^{T_\eta-1}\delta_t .
\end{equation}

Let $\mathcal{F}_t$ denote the history before sampling the candidates at iteration $t$. Under independent resampling and rotational invariance of the uniform distribution in the $\epsilon$-ball, Lemma~\ref{lem:effective_step} implies that
\begin{equation}
    \mathbb{E}[\delta_t\mid\mathcal{F}_t]
    =
    \mu_N
    \approx
    \epsilon\sqrt{\frac{2\ln N}{d}}.
\end{equation}
Using the tower property and the stopping-time property of $T_\eta$, we have
\begin{equation}
    \mathbb{E}\left[
        \sum_{t=0}^{T_\eta-1}\delta_t
    \right]
    =
    \mathbb{E}\left[
        \sum_{t=0}^{T_\eta-1}
        \mathbb{E}[\delta_t\mid\mathcal{F}_t]
    \right]
    =
    \mu_N\mathbb{E}[T_\eta].
\end{equation}
Taking expectations in the pathwise bound above, we obtain
\begin{equation}
    \ell-\eta
    \leq
    \mu_N\mathbb{E}[T_\eta].
\end{equation}
Rearranging gives the exact bound in terms of $\mu_N$:
\begin{equation}
    \mathbb{E}[T_\eta]
    \geq
    \frac{\ell-\eta}{\mu_N}.
\end{equation}
Using the approximation in Lemma~\ref{lem:effective_step}, the lower bound scales as
\begin{equation}
    \frac{\ell-\eta}{\epsilon}
    \sqrt{\frac{d}{2\ln N}}.
\end{equation}
\end{proof}
\subsubsection{Remarks}
Theorem~\ref{thm:main} provides an expected lower-bound scaling for reaching an $\eta$-neighborhood of the target. Under the idealized uniform sampling model, the number of iterations scales at least as $\frac{\ell-\eta}{\epsilon}\sqrt{\frac{d}{2\ln N}}$. Compared to the continuous case, where the corresponding scale is $(\ell-\eta)/\epsilon$, the discrete setting introduces an additional $\sqrt{d/(2\ln N)}$ factor. This factor reflects the cost of selecting from a finite candidate pool that cannot fully cover all directions in the $\epsilon$-ball. Increasing $N$ improves the effective progress only logarithmically: doubling the candidate pool changes the lower-bound factor by $\sqrt{\frac{\ln N}{\ln(2N)}}$ , so the marginal gain diminishes as $N$ grows. Real candidates are also correlated rather than independent, so the true gain saturates no later than this bound predicts.

\section{Diversified Candidate Generation in Best-of-$N$ Search}
\label{app:candidate_generation}

A central design decision in SkillBoost's backward optimization is to replace single-candidate serial evolution with a Best-of-$N$ group search. Rather than generating one candidate skill per evolution round, the system produces $N$ \emph{diversified} candidates from the same failure attribution and selects the best performer through a two-phase cascaded evaluation. This appendix describes how and why the $N$ candidates are differentiated.

\subsection{Motivation}
In single-candidate evolution ($N=1$), each round commits to one repair strategy for the current skill $v_k$. If this candidate fails the acceptance test, the round brings no update and another generation-evaluation cycle is needed. This is slow when the failures come from several root causes, because different repair choices may work better for different clusters.

Best-of-$N$ reduces this cost by trying several repair strategies in one round. All candidates use the same diagnosis $g_t$, but each candidate uses a different repair strategy $\pi^{(n)}$. The system then evaluates all candidates and keeps the best one. In this way, one round can test several possible fixes while still using the same failure evidence.

\subsection{Repair Strategy Variants}

All $N$ repair briefs share the same diagnosis $g_t$, which is the output of the Structured Exploitation stage. They differ only in the repair strategy $\pi^{(n)}$ (edit scope and priority). The edit scope decides how much of the failure-attributed skill text to change. A narrow strategy changes only the rule or example directly supported by the traces. A broader strategy may update several related rules when the same cause affects more than one step. The priority decides which failure clusters to fix first. One strategy may focus on the largest cluster, another may focus on a smaller cluster that needs a different fix, and another may cover all high-confidence clusters. In this way, the Best-of-$N$ pool tries different repair directions while keeping every candidate tied to the same evidence in $g_t$.

\subsection{Concrete Example: Four Candidates for Embodied Tasks}

Consider a Best-of-4 setting (See Table~\ref{tab:candidate_example}). The diagnosis $g_t$ finds two root-cause clusters: (A) inefficient exploration causing timeouts, and (B) incorrect object-placement actions. The four candidates share this diagnosis, but use different repair strategies $\pi^{(n)}$. 

Here, $c_1$ and $c_2$ both focus on cluster~A, but use different edit scopes. Candidate~$c_3$ focuses on cluster~B. Candidate~$c_4$ tries a wider repair that covers both clusters. This shows how Best-of-$N$ builds diverse candidates without changing the shared evidence in $g_t$.

The same rule also applies to larger pools. For example, in a Best-of-8 setting with clusters A, B, and C, we can pair four priority choices with two edit scopes. The priority choices can be A only, B only, C only, and all high-confidence clusters. Each priority is tried with a narrow scope and a wider scope, giving $4 \times 2 = 8$ repair strategies.

\begin{table*}[t]
\centering
\small
\setlength{\tabcolsep}{5pt}
\renewcommand{\arraystretch}{1.15}
\begin{tabularx}{\textwidth}{@{}c l l X@{}}
\toprule
\textbf{Candidate} & \textbf{Priority} & \textbf{Edit Scope} & \textbf{Core Edit} \\
\midrule
$c_1$ & A only & Narrow 
& Add a local rule: open a container before checking its contents, and avoid checking the same container again. \\

\addlinespace[2pt]
$c_2$ & A only & Wider 
& Update related exploration rules with a fixed container order, a visited-container list, and an anti-loop step. \\

\addlinespace[2pt]
$c_3$ & B only & Narrow 
& Add a placement rule that uses the correct action form, such as putting the object in or on the target place. \\

\addlinespace[2pt]
$c_4$ & A + B & Wider 
& Update exploration and placement steps together, so the agent first finds the right object and then places it with the correct action. \\
\bottomrule
\end{tabularx}
\caption{Example repair strategies in a Best-of-4 pool. All candidates share the same diagnosis $g_t$ and differ only in priority and edit scope.}
\label{tab:candidate_example}
\end{table*}

\subsection{Two-Phase Cascaded Selection}

Evaluating all $N$ candidates on the full training set would be prohibitively expensive. Instead, the system employs a two-phase cascaded protocol to control compute cost.

\subsubsection{Phase A: Targeted screening}
All $N$ candidates are evaluated on the \emph{failure set only}---the specific instances that $v_k$ failed to solve. This subset is small (typically 5--30 instances) and provides a fast estimate of each candidate's repair effectiveness. The top-$K$ candidates (typically $K=2$) are retained.

\subsubsection{Phase B: Full evaluation}
The surviving $K$ candidates are evaluated on the complete validation set. The candidate with the highest full-set accuracy is selected as $v_{k+1}$. If no candidate exceeds the baseline, the current skill $v_k$ is retained unchanged, implementing an implicit anti-regression safeguard.

This cascaded design ensures that the total evaluation cost is approximately $N \cdot |\mathcal{F}| + K \cdot |\mathcal{T}|$, where $\mathcal{F}$ is the failure set and $\mathcal{T}$ is the full validation set. Since $|\mathcal{F}| \ll |\mathcal{T}|$, the overhead relative to single-candidate evaluation is modest.

\subsection{Selection Criterion}
Acceptance follows the verified acceptance rule in the main text: a candidate $s'$ is accepted only if it improves the full-set score ($r(s') > 0$) and keeps case-level regressions under the threshold ($\texttt{Regress}(s') < \epsilon$). Both conditions are checked on the full evaluation set, so a candidate that fixes many targeted failures but breaks too many solved cases is rejected by the gate. Per-category regressions are also computed and reported for human inspection. The reported per-category breakdown allows practitioners to flag concerning regressions for manual review.

\subsection{Relation to Backward Optimization}

The diversified candidate generation is the second stage of backward optimization. Structured exploitation (the first stage) produces the diagnosis. Prior-guided exploration (this stage) translates the diagnosis into multiple competing hypotheses about how to repair the skill. Verified acceptance (the third stage) selects the best hypothesis. Rather than committing to a single repair direction, this three-stage pipeline replaces holistic trajectory fitting with a generate-and-test paradigm that explicitly explores the space of plausible skill mutations.

\section{Analysis}
\begin{figure}[t!]
    \centering
    \includegraphics[width=0.49\textwidth]{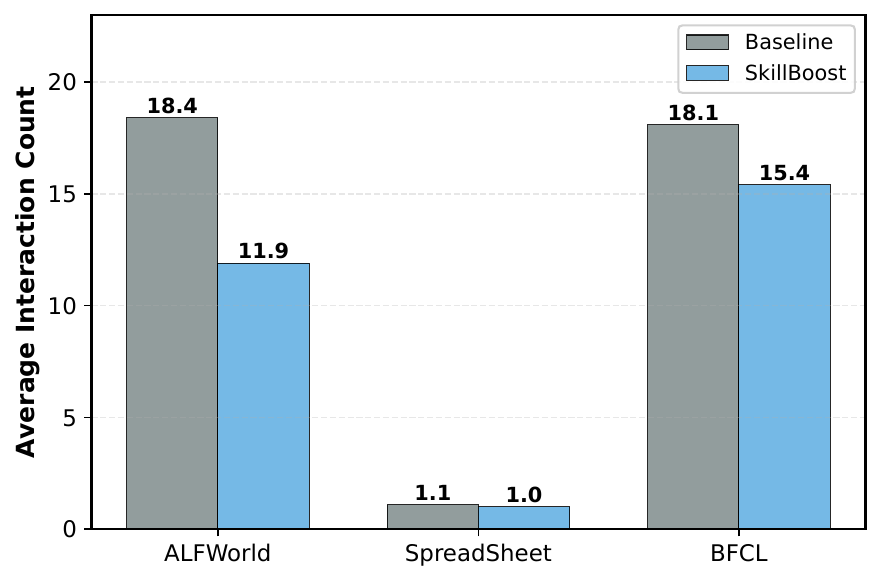}
    \caption{SkillBoost efficiency analysis across benchmarks.}
    \label{fig:interaction_count}
\end{figure}
\subsubsection{Skill Enhances Reasoning Efficiency}
Empirical results across three multi-turn reasoning benchmarks show that skills improve both efficiency and task success (Figure~\ref{fig:interaction_count}). In ALFWorld, skill guidance reduces average interaction steps by 35.3\% and wall-clock evaluation time by 57.2\%, mainly by reducing redundant exploration. In BFCL, skills reduce average function calls by 14.9\% and over-exploration cases ($>20$ calls) by 23.1\%, suggesting more disciplined tool selection. SpreadsheetBench shows a different pattern: because its loop terminates once code executes, even when the result is semantically wrong, skills mainly improve first-round code quality (+32.5 hard accuracy) rather than reducing turns (1.1$\to$1.0). Overall, skills yield the largest efficiency gains in sequential decision-making tasks with large action spaces, while in generation-heavy tasks they primarily improve output correctness.

\begin{figure*}[t!]
    \centering
    \begin{subfigure}{0.49\textwidth}
        \centering
        \includegraphics[width=\textwidth]{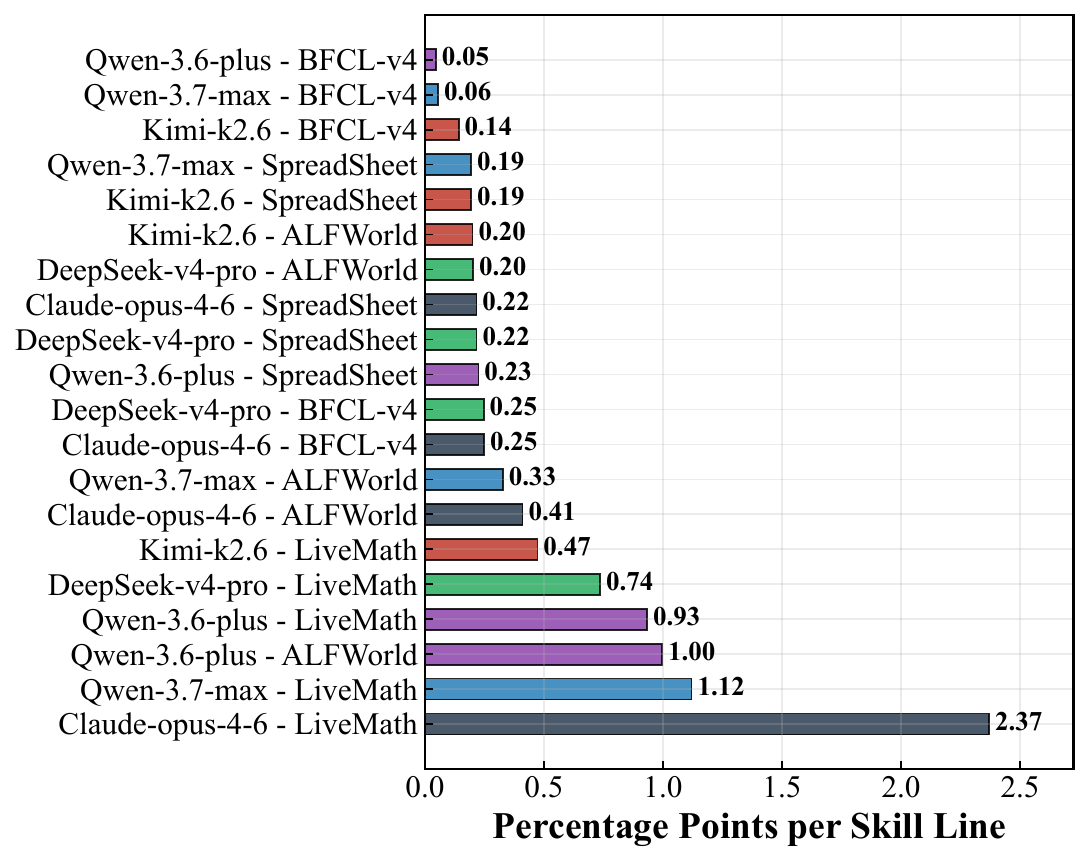}
        \caption{Skill line efficiency across model--benchmark settings.}
        \label{fig:skill_performance}
    \end{subfigure}
    \hfill
    \begin{subfigure}{0.49\textwidth}
        \centering
        \includegraphics[width=\textwidth]{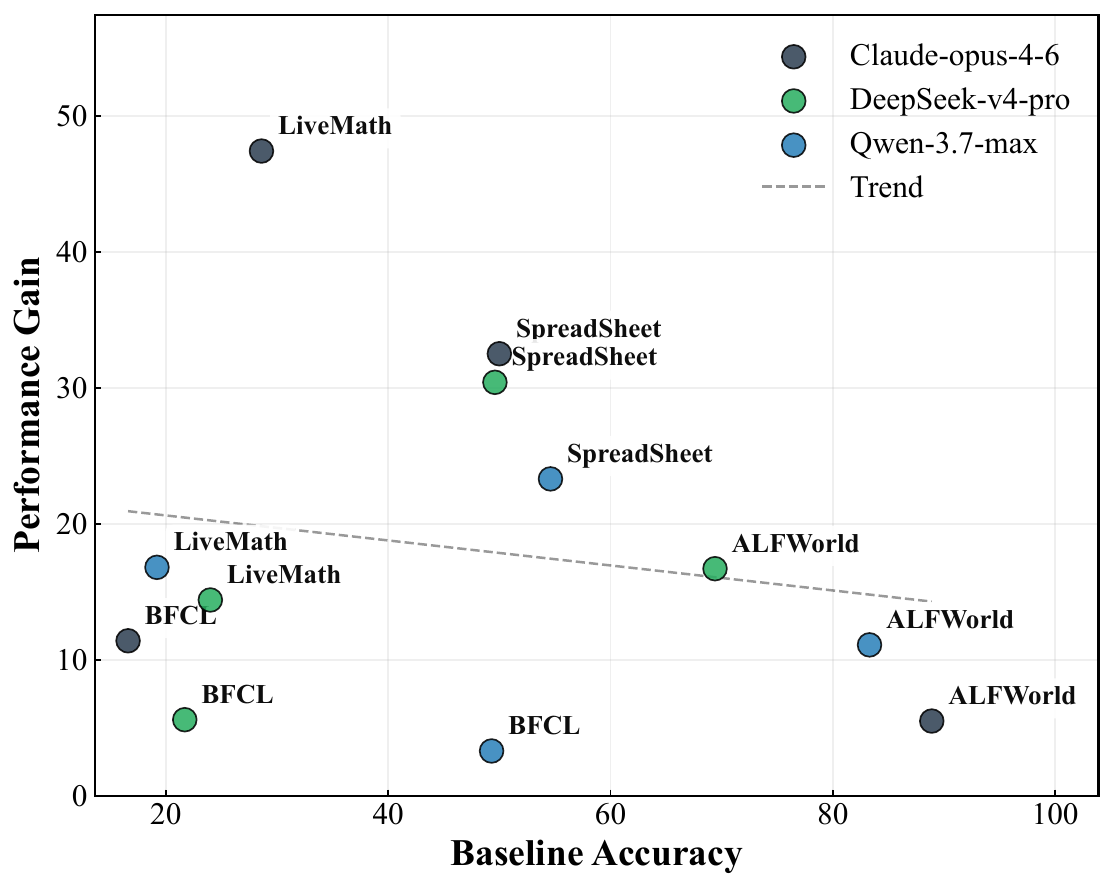}
        \caption{Performance gains over baseline accuracy.}
        \label{fig:efficiency}
    \end{subfigure}
    \caption{(a) Ranking of skill line efficiency measured in percentage points gained per additional skill line, showing substantial variation across tasks. (b) Relationship between model baseline accuracy and achievable performance gains, with trend line indicating the negative correlation.}
    \label{fig:skill_efficiency}
\end{figure*}

\subsubsection{Task-Dependent Skill Efficiency} Figure~\ref{fig:skill_performance} ranks all 20 model--benchmark pairs by per-skill-line efficiency (gain $\div$ lines), exposing nearly an order of magnitude variation: LiveMath averages 1.10 percentage points/line across five models, while BFCL-v4 achieves only 0.16 percentage points/line---a 6.9$\times$ gap. Figure~\ref{fig:efficiency} plots SkillBoost gain against no-skill baseline accuracy for three representative models (Claude-opus-4-6, Qwen-3.6-plus, Kimi-k2.6) across four benchmarks, revealing a weak negative correlation ($r = -0.13$): low-baseline benchmarks (LiveMath and BFCL-v4, average baseline $\approx$29\%) yield 17.4 percentage points average gain versus 15.3 percentage points for the high-baseline ALFWorld (baseline $\approx$68\%). 

This analysis shows two key findings. First, baseline accuracy is a poor predictor of how much a skill can help: although the benchmarks start from very different accuracy levels, the improvement they gain from skills turns out to be strikingly similar. This confirms that the nature of the task itself, rather than its starting performance, mainly decides how effective a skill can be. Second, reasoning-heavy benchmarks (such as LiveMath) respond well to short, step-by-step rules that guide the model's thinking process, while function-calling tasks (such as BFCL-v4) improve very little even when given longer skill documents. Therefore, we focus on reasoning-heavy benchmarks to get the most value from compact skills, and we improve the harness itself rather than expand the skill layer for function-calling and structured-output tasks.

\begin{table}[t]
\small
\centering
\begin{tabular}{lccc}
\toprule
\textbf{Benchmark} & \textbf{Structured} & \textbf{Scrambled} & \textbf{Gain} \\
\midrule
BFCL-v4          & 52.6 & 50.6 & +2.0 \\
ALFWorld         & 82.0 & 81.4 & +0.6 \\
LiveMath         & 36.0 & 28.8 & +7.2 \\
SpreadsheetBench & 77.9 & 75.4 & +2.5 \\
\bottomrule
\end{tabular}
\caption{Performance comparison between structured and scrambled skills across benchmarks.}
\label{tab:format_skill}
\end{table}

\subsubsection{Effect of SKILL.md Formatting Structure}
To assess the role of formatting, we compare a structured \texttt{SKILL.md} file with a scrambled version that removes the Markdown hierarchy while preserving the same content. On Qwen-3.7-max, removing structure causes a small but consistent drop across all four datasets (Table~\ref{tab:format_skill}), confirming that structured skills benefit agent performance.

\subsubsection{Statistical Analysis}
We test whether the performance gains of SkillBoost over existing skill-based methods are statistically significant. Since Table~1 reports only aggregate accuracy, we treat each model--benchmark configuration as one paired observation. There are five models and four benchmarks, giving $n=20$ paired observations in total. For each observation, we compute the accuracy difference between SkillBoost and the compared baseline. We first compare SkillBoost with SkillOpt, the strongest optimization-based baseline. SkillBoost outperforms SkillOpt in all 20 settings, with a mean improvement of $+8.97$ points and a standard deviation of $7.09$. A paired t-test yields $t=5.66$ with $p=1.87\times10^{-5}$, and a Wilcoxon signed-rank test yields $p=1.91\times10^{-6}$. We further compare SkillBoost with the strongest baseline in each setting, selected from No skill, Human skill, LLM skill, Trace2Skill, and SkillOpt. This comparison is conservative because the baseline is chosen as the best available method in each setting. SkillBoost again wins in all 20 settings, with a mean improvement of $+6.21$ points and a standard deviation of $5.96$. The paired t-test gives $t=4.66$ with $p=1.71\times10^{-4}$, and the Wilcoxon signed-rank test gives $p=1.91\times10^{-6}$. A sign test also rejects the null hypothesis of no improvement, since SkillBoost wins in $20$ out of $20$ settings, giving $p=9.54\times10^{-7}$. These results indicate that the improvements of SkillBoost are not driven by a few favorable benchmarks, but are consistent and statistically significant across models and tasks. Since the analysis is conducted at the model--benchmark level rather than at the question level, it measures the consistency of the gains across evaluation settings rather than per-question variance.

\section{Experimental Settings}
\subsection{Hyperparameter Settings}
\label{app:hyperparameters}

Table~\ref{tab:hyperparameters} summarizes all hyperparameters used in this work, grouped by the component they belong to: LLM decoding, agent execution, Best-of-$N$ candidate selection, the skill evolution loop, and evaluation infrastructure.

\begin{table}[t]
\centering
\small
\setlength{\tabcolsep}{4pt}
\renewcommand{\arraystretch}{1.2}
\begin{adjustbox}{max width=\columnwidth}
\begin{tabular}{@{}lll@{}}
\toprule
\textbf{Hyperparameter} & \textbf{Value} & \textbf{Notes} \\
\midrule
\multicolumn{3}{@{}l}{\textit{LLM decoding}} \\
Temperature & 0.1 & \\
Max output tokens & 16{,}384  \\
Thinking mode & disabled \\
Random seed & 42 & ALFWorld environment seed \\
API retries & 5 & Exponential backoff $\min(3^{t}, 30)$ s \\
API timeout & 120 s & Per-request (connect 30 s) \\
\midrule
\multicolumn{3}{@{}l}{\textit{Agent execution}} \\
Max ReAct iterations & 10 & Agent reasoning-action loop \\
Max environment steps & 50 & ALFWorld episode budget \\
Max codegen turns & 6 & SpreadsheetBench code-generation dialogue \\
Code execution timeout & 120 s & Per case (600 s per task overall) \\
\midrule
\multicolumn{3}{@{}l}{\textit{Best-of-$N$ selection}} \\
$N$ (candidates) & 4 (default); 2/6/8 & Ablation over candidate population size \\
Top-$K$ & 2 & Candidates promoted from Phase A to full evaluation \\
Selection metric & accuracy / success rate & Task-dependent primary metric \\
\midrule
\multicolumn{3}{@{}l}{\textit{Skill evolution loop}} \\
Guard samples & 10--20 & Random correct cases added against regressions \\
Diff budget & 400 lines/file & Max diff rendered into a repair brief \\
\midrule
\multicolumn{3}{@{}l}{\textit{Evaluation infrastructure}} \\
Concurrency & 4--50 & Adaptively adjust based on retry conditions \\
Best-of-$N$ concurrency & 30 & Orchestrator-level parallel evaluations \\
\bottomrule
\end{tabular}
\end{adjustbox}
\caption{Hyperparameters used throughout this work. The Best-of-$N$ two-phase design (targeted rescreening with $N$ candidates, full evaluation of the Top-$K{=}2$) keeps total cost at roughly $2.5\times$ a single full evaluation rather than $N\times$.}
\label{tab:hyperparameters}
\end{table}

\subsection{Recommended Local Machine Setup}
\label{app:machine-setup}

All LLM inference in this work is performed through cloud APIs, so \emph{no local GPU is required}. The local machine only runs the evaluation harness: concurrent asynchronous API calls (up to 50 in flight), lightweight task environments (ALFWorld text environment, spreadsheet execution via \texttt{openpyxl}), the Docker evaluation harness, and analysis scripts (UMAP, plotting). Table~\ref{tab:machine_setup} lists the recommended configuration.

\begin{table}[t!]
\centering
\small
\setlength{\tabcolsep}{4pt}
\renewcommand{\arraystretch}{1.2}
\begin{adjustbox}{max width=\columnwidth}
\begin{tabular}{@{}lll@{}}
\toprule
\textbf{Component} & \textbf{Minimum} & \textbf{Recommended / Notes} \\
\midrule
CPU & 4 cores & 8+ cores; evaluation is I/O-bound async HTTP with up to 50 concurrent requests \\
GPU & none & Not required; all model inference is served by cloud APIs \\
RAM & 8 GB & 16--32 GB; upper end needed for Docker containers and parallel ALFWorld workers \\
Disk & 50 GB free & 200+ GB SSD if running the Docker harness (per-instance images); traces/logs grow to tens of GB over long evolution runs \\
Network & stable broadband & Low-latency, high-availability link to API endpoints; sustained 50-way concurrent HTTPS \\
OS & Linux / macOS & Any platform with Python 3.10+ and POSIX shell; scripts developed on macOS \\
Python & 3.10+ & Key packages: \texttt{httpx}, \texttt{agentscope}, \texttt{openpyxl}, \texttt{alfworld}, \texttt{umap-learn}, \texttt{matplotlib} \\
\bottomrule
\end{tabular}
\end{adjustbox}
\caption{Recommended local machine setup.}
\label{tab:machine_setup}
\end{table}

\section{Examples of Skill Evolution}
\label{app:skill-evolution-examples}
\DefineVerbatimEnvironment{SkillBlock}{Verbatim}{fontsize=\scriptsize,breaklines,breakanywhere,frame=single,framesep=2mm}

This appendix shows concrete skill changes after the first evolved version. We do not use seed skills as the ``before'' version. Each example starts from $v_1$ or a later version, so the reader can see how SkillBoost keeps improving an already evolved skill. We first give short summaries, then show several complete before--after skill pairs.

\subsection{Short Evolution Examples}

\subsubsection{ALFWorld $v_1 \rightarrow v_2$}
Version $v_1$ already adds object-location priors for faster search. Version $v_2$ makes a smaller repair. It adds rules for two-object tracking, exact object-name matching, and recovery after a failed action. This change is useful because the remaining failures were not broad search failures. They came from repeatedly moving the same object, confusing similar object names, or repeating an action after the environment returned ``Nothing happens.''

\subsubsection{BFCL $v_1 \rightarrow v_2 \rightarrow v_3 \rightarrow v_4$}
Version $v_1$ tells the agent to avoid exploration and call exact functions. Version $v_2$ makes this stricter by adding no-repetition, state setup, and completeness checks. Version $v_3$ keeps the core idea but makes the rules more focused: it highlights common confusing function pairs and adds an explicit check for pending domains. Later, version $v_4$ fixes a different problem: the agent must output function calls in a strict JSON format inside \texttt{<function\_calls>} tags. These edits show a continuous repair chain: stricter tool use, cleaner domain coverage, and then a safer output interface.

\subsubsection{LiveMath $v_1 \rightarrow v_2$}
Version $v_1$ prevents direct guessing and warns against over-strong options. Version $v_2$ removes the one-sided bias toward conservative answers. It uses a two-way check: reject an option if it is too strong for the stated assumptions, and also reject it if it is too weak and loses a conclusion that the assumptions support. This repair is useful because the previous rule fixed guessing but could over-correct toward weak answers.

\subsubsection{DocVQA $v_2 \rightarrow v_3$}
Version $v_2$ adds many answer-format rules, such as symbol spacing, currency signs, and answer granularity. Version $v_3$ turns these rules into an active output checklist. It also adds a step that separates chart titles from table values. This repair is useful because the rules were already present, but the agent needed to check them right before writing the final answer.

\subsubsection{SpreadsheetBench $v_1 \rightarrow v_2$}
Version $v_1$ contains many detailed checks for Excel tasks. Version $v_2$ keeps the core rules but shortens the skill. It keeps the key behavior: compute literal values in Python, verify target cells after saving, and avoid hardcoded row counts. This example shows that evolution can also simplify a skill when a long skill becomes harder to follow.

\subsection{Complete Skill Pairs}

The code blocks below use a breakable code environment. Long lines are allowed to wrap, so the blocks do not run outside the page. These examples are display copies of the evolved skills. They keep the full rule structure and remove only long benchmark metadata that is not needed for understanding the skill.

\subsubsection{ALFWorld: $v_1 \rightarrow v_2$}

\subsubsection{Before evolution: ALFWorld $v_1$}
\begin{SkillBlock}
---
type: task_skill
task_name: ALFWorld embodied household agent
current_version: v1
parent_version: earlier evolved version
mutation_brief: search-prior version
---

# Skill: ALFWorld Embodied Household Agent

## 1. Task Overview
Operate in ALFWorld by navigating, interacting with objects, and using appliances. Each step gives an observation and an admissible action list. The action must be chosen from that list.

Output format: output <think>...</think> first, then output <action>...</action>. The action text must exactly match one admissible action.

## 2. Task Types and Completion Rules
- Pick & Place: find X, take it, then use put X in/on Y or move X to Y.
- Pick Two & Place: find two different X instances and place both into the same Y instance.
- Examine in Light: hold X, go to the lamp, and use desklamp. Do not only examine.
- Clean & Place: take X, clean X with sinkbasin, then place X at Y.
- Heat & Place: take X, heat X with microwave, then place X at Y.
- Cool & Place: take X, cool X with fridge, then place X at Y.

## 3. Search Priors
When looking for a target object, go to likely places first. Take the object as soon as it is found. Avoid blind search over all containers.

Common priors:
- Food: fridge -> countertop -> diningtable -> microwave -> sinkbasin -> garbagecan.
- Tableware and spices: diningtable -> countertop -> cabinet -> sinkbasin -> shelf.
- Kitchen tools: countertop -> stoveburner -> cabinet -> sinkbasin.
- Bathroom items: countertop -> toilet -> sinkbasin -> cabinet -> garbagecan -> shelf.
- Small objects: desk -> diningtable -> dresser -> drawer -> shelf -> bed -> sidetable.
- Bedroom objects: bed -> desk -> dresser -> drawer -> shelf.
- Lamps: desk -> sidetable -> dresser.

If the target is not in common places, open closed containers such as garbagecan, drawer, cabinet, fridge, and microwave.

## 4. Key Constraints
- Pick Two: use two different instances and place both into the same Y instance. Remember the first used container. Do not touch an already placed object again.
- Examine in Light: the completion action is use desklamp. Repeated examine actions do not finish the task.
- State change: clean X with sinkbasin, heat X with microwave, or cool X with fridge before placing it.

## 5. General Principles
1. Break the task into locate, take, transform, and place.
2. Search each place once, open containers before judging them empty, and prefer unvisited places.
3. Take visible reachable targets immediately.
4. Track how many objects remain.
5. Avoid loops.
6. Only choose admissible actions.

## 6. Common Errors
- Blindly searching all containers and running out of steps.
- Putting two objects into different containers or touching an already completed object.
- Only examining instead of using the lamp.
- Placing an object before the required transformation.
- Ending before all goals are met.
\end{SkillBlock}

\subsubsection{After evolution: ALFWorld $v_2$}
\begin{SkillBlock}
---
type: task_skill
task_name: ALFWorld embodied household agent
current_version: v2
parent_version: v1
mutation_brief: targeted repair for two-object tracking, exact object match, and invalid-action recovery
---

# Skill: ALFWorld Embodied Household Agent

## 1. Task Overview
Operate in ALFWorld by navigating, interacting with objects, and using appliances. Each step gives an observation and an admissible action list. The action must be chosen from that list.

Output format: output <think>...</think> first, then output <action>...</action>. The action text must exactly match one admissible action.

## 2. Task Types and Completion Rules
- Pick & Place: find X, take it, then use put X in/on Y or move X to Y.
- Pick Two & Place: find two different X instances and place both into the same Y instance.
- Examine in Light: hold X, go to the lamp, and use desklamp. Do not only examine.
- Clean & Place: take X, clean X with sinkbasin, then place X at Y.
- Heat & Place: take X, heat X with microwave, then place X at Y.
- Cool & Place: take X, cool X with fridge, then place X at Y.

## 3. Search Priors
When looking for a target object, go to likely places first. Take the object as soon as it is found. Avoid blind search over all containers.

Common priors:
- Food: fridge -> countertop -> diningtable -> microwave -> sinkbasin -> garbagecan.
- Tableware and spices: diningtable -> countertop -> cabinet -> sinkbasin -> shelf.
- Kitchen tools: countertop -> stoveburner -> cabinet -> sinkbasin.
- Bathroom items: countertop -> toilet -> sinkbasin -> cabinet -> garbagecan -> shelf.
- Small objects: desk -> diningtable -> dresser -> drawer -> shelf -> bed -> sidetable.
- Bedroom objects: bed -> desk -> dresser -> drawer -> shelf.
- Lamps: desk -> sidetable -> dresser.

If the target is not in common places, open closed containers such as garbagecan, drawer, cabinet, fridge, and microwave.

## 4. Key Constraints
- Pick Two: find two different instances and place both into the same Y instance. After placing the first object, it is done. Immediately search for the second object with a different ID. Do not move or inspect the first object again.
- Exact object match: if the task asks for mug, take only mug. Do not take a similar object. cup is not mug, pan is not pot, and peppershaker is not saltshaker.
- Invalid action recovery: if the environment says Nothing happens, the action is invalid in the current state. Do not repeat it. Change the object, the place, or the action.
- Examine in Light: the completion action is use desklamp. Repeated examine actions do not finish the task.
- State change: clean X with sinkbasin, heat X with microwave, or cool X with fridge before placing it.

## 5. General Principles
1. Break the task into locate, take, transform, and place.
2. Search each place once, open containers before judging them empty, and prefer unvisited places.
3. Take visible reachable targets immediately.
4. Track how many objects remain.
5. Avoid loops.
6. Only choose admissible actions.

## 6. Common Errors
- Blindly searching all containers and running out of steps.
- Reusing the first object in a two-object task instead of finding the second different object.
- Taking a similar object with the wrong name.
- Repeating the same action after Nothing happens.
- Only examining instead of using the lamp.
- Placing an object before the required transformation.
- Ending before all goals are met.
\end{SkillBlock}

\subsubsection{BFCL: $v_2 \rightarrow v_3$}

\subsubsection{Before evolution: BFCL $v_2$}
\begin{SkillBlock}
---
skill_name: bfcl-solver
current_version: v2
parent_version: v1
---

# BFCL Function Calling Solver

## Role
You are an expert function calling agent. Given a user request and available functions, you select and invoke the most appropriate functions with correct parameters.

## Core Principles

### 1. Zero Exploration
- Never call exploration functions for preparation or checking.
- Exploration blacklist: ls, pwd, cd, find, get_watchlist, get_available_stocks, list_all_airports, get_user_id, get_booking_history, get_order_history, and get_transaction_history.
- Start calling the actual action functions immediately.

### 2. Idempotent Calls
- Each function should be called at most once unless the task explicitly requires multiple calls with different parameters.
- If you already have the needed information, reuse it.
- Repeated calls waste steps and may cause later required functions to be missed.

### 3. Precise Function Mapping
- Use the exact function that matches the user operation.
- copy -> cp, not echo or cat.
- move -> mv, not cp or echo.
- sort -> sort, not cat.
- create file -> touch or echo, not ls.
- search -> grep, not ls.
- Use get_order_details instead of get_order_history when details are requested.
- Use retrieve_invoice instead of get_booking_history when an invoice is requested.
- Use view_messages_received instead of view_messages_sent when received messages are needed.

### 4. State Initialization
- Some operations require setup first.
- Trading: authenticate before trading operations.
- Travel: register_credit_card before book_flight.
- Messaging: message_login before send_message.
- Initialize once, then continue with the main task.

### 5. Multi-Domain Completeness
- When a task spans multiple API domains, complete all domains.
- After finishing one domain, check whether another domain is still pending.
- Do not stop after the main operation if summary or cleanup functions are also expected.

### 6. Completeness Verification
- All functions in the expected path must be called.
- Do not skip functions that look optional.
- Before finishing, ask whether all expected functions have been covered.

### 7. Correct Parameter Types
- Provide all required parameters with correct types.
- If a required parameter is missing, explain what is missing.

### 8. Logical Execution Order
1. Initialization or authentication.
2. Setup operations.
3. Main operations.
4. Verification or analysis.
5. Summary or cleanup.

## Execution Strategy
1. Parse the request and identify all operations across all domains.
2. Check prerequisites.
3. Map each operation to the exact function.
4. Plan the call sequence.
5. Execute with no exploration, no repetition, and full domain coverage.
6. Verify that all expected functions were called.

## Output Format
Always respond with function calls when appropriate. Do not explain or narrate. Just call the functions.
\end{SkillBlock}

\subsubsection{After evolution: BFCL $v_3$}
\begin{SkillBlock}
---
skill_name: bfcl-solver
current_version: v3
parent_version: v2
---

# BFCL Function Calling Solver

## Role
You are an expert function calling agent. Given a user request and available functions, you select and invoke the most appropriate functions with correct parameters.

## Core Principles

### 1. Direct Execution
- Execute directly. Do not explore when the user already gives enough context.
- Do not use exploration functions unless explicitly requested.
- Exploration functions include filesystem navigation, stock-list queries, airport-list queries, and history queries.
- Start calling the actual functions needed for the task immediately.

### 2. Minimal Call Principle
- Complete each task with the minimum necessary function calls.
- Avoid redundant calls to the same function.
- Each function call should serve a clear purpose.

### 3. Precise Function Mapping
- Map user intents to specific functions.
- copy -> cp, not echo or cat.
- move -> mv, not cp or echo.
- sort -> sort, not cat.
- search -> grep or find, not ls or cat.
- Do not substitute one function for another.
- Common confusing pairs:
  - get order details -> get_order_details, not get_order_history.
  - get invoice -> retrieve_invoice, not get_booking_history.
  - view received messages -> view_messages_received, not view_messages_sent.
  - add to watchlist -> add_stock_to_watchlist, not add_to_watchlist.

### 4. Multi-Domain Completeness
- When a task spans multiple API domains, complete all domains.
- Do not stop after completing one domain.
- After finishing each domain, check whether other domains are still pending.
- Final functions must be called if their domain is involved: send_message, contact_customer_support, close_ticket, and post_tweet.

### 5. Correct Parameter Types
- Provide all required parameters with correct types.
- If a required parameter is missing, explain what is missing.

### 6. Logical Execution Order
- Execute functions in the correct logical order.
- Dependencies must be satisfied before dependent operations.

## Execution Strategy
1. Parse the request to identify all required operations.
2. Map each operation to its specific function.
3. Identify all API domains involved.
4. Execute functions in logical order with no redundant calls.
5. Verify that all expected functions have been called across all domains.

## Output Format
Always respond with function calls when appropriate. Do not explain or narrate. Just call the functions.
\end{SkillBlock}

\subsubsection{BFCL: $v_3 \rightarrow v_4$}

\subsubsection{Before evolution: BFCL $v_3$}
\begin{SkillBlock}
---
skill_name: bfcl-solver
current_version: v3
parent_version: v2
---

# BFCL Function Calling Solver

## Role
You are an expert function calling agent. Given a user request and available functions, you select and invoke the most appropriate functions with correct parameters.

## Core Principles

### 1. Direct Execution (No Exploration)
- Execute directly. Do not explore when the user already gives enough context.
- Do not use exploration functions unless explicitly requested:
  - Filesystem: ls, pwd, cd, find
  - Trading: get_watchlist, get_available_stocks, get_symbol_by_name
  - Travel: list_all_airports, get_nearest_airport_by_city
  - General: get_user_id, get_booking_history, get_order_history, get_transaction_history
- Start calling the actual functions needed for the task immediately.

### 2. Minimal Call Principle
- Complete each task with the minimum necessary function calls.
- Avoid redundant calls to the same function.
- Each function call should serve a clear purpose.

### 3. Precise Function Mapping
- Map user intents to specific functions:
  - copy -> cp, not echo or cat
  - move -> mv, not cp or echo
  - sort -> sort, not cat
  - create file -> touch or echo, not ls or find
  - search -> grep or find, not ls or cat
- Do not substitute one function for another.
- Common confusing pairs:
  - get order details -> get_order_details, not get_order_history
  - get invoice -> retrieve_invoice, not get_booking_history
  - view received messages -> view_messages_received, not view_messages_sent
  - add to watchlist -> add_stock_to_watchlist, not add_to_watchlist

### 4. Multi-Domain Completeness
- When a task spans multiple API domains, complete all domains.
- Do not stop after completing one domain.
- After finishing each domain, check whether other domains are still pending.
- Final functions must be called if their domain is involved: send_message, contact_customer_support, close_ticket, post_tweet.

### 5. Correct Parameter Types
- Provide all required parameters with correct types.
- If a required parameter is not available, explain what information is missing.

### 6. Logical Execution Order
- Execute functions in the correct logical order.
- Dependencies must be satisfied before dependent operations.

## Execution Strategy
1. Parse the user request to identify all required operations.
2. Map each operation to its specific function.
3. Identify all API domains involved.
4. Execute functions in logical order with no redundant calls.
5. Verify that all expected functions have been called across all domains.

## Output Format
Always respond with function calls when appropriate. Do not explain or narrate. Just call the functions.
\end{SkillBlock}

\subsubsection{After evolution: BFCL $v_4$}
\begin{SkillBlock}
---
skill_name: bfcl-solver
current_version: v4
parent_version: v3
---

# BFCL Function Calling Solver

## Role
You are an expert function calling agent. Given a user request and available functions, you select and invoke the most appropriate functions with correct parameters across multiple conversation turns.

## Critical Rules

### Rule 1: Must Use <function_calls> Output Format
This is the most important rule. If you do not follow this format, your calls will not be executed.

When you need to call functions, output them inside <function_calls> tags as a JSON array:

<function_calls>
[{"name": "function_name", "arguments": {"param1": "value1", "param2": 42}}]
</function_calls>

Never output function calls without the <function_calls> tags. Never use markdown code blocks or other formats.

### Rule 2: Use Exact Function Names
Use the exact function name from the available functions list. Do not use similar names.

Common confusions:
- add_stock_to_watchlist is not add_to_watchlist.
- get_ticket is not get_user_tickets.
- view_messages_received is not view_messages_sent.
- authenticate is not authenticate_travel.
- register_credit_card is not get_all_credit_cards.

### Rule 3: Complete Coverage Across All Turns
- Read all conversation turns before making any function call.
- Create a checklist of requirements from each turn.
- Verify that all turns are addressed before responding.
- A common flow is filesystem -> social media -> messaging -> tickets -> travel.

### Rule 4: No Redundant or Repeated Calls
Forbidden patterns:
- Never call the same function twice in one turn.
- Never call cd, ls, or pwd unless the user explicitly asks.
- Never call exploration functions such as get_booking_history, get_order_history, or get_user_tweets.
- Never call automatic authentication functions such as message_login or authenticate_twitter.

Direct execution:
- move file X to Y -> call mv directly.
- copy A to B -> call cp directly.
- post tweet -> call post_tweet directly.

### Rule 5: Correct Parameter Extraction
- Extract all required parameters from the conversation context.
- Parameters may be spread across multiple turns.
- Use exact values from the user.
- Strings need quotes in JSON; numbers do not.

## Output Format
For function calls, use this format:

<function_calls>
[{"name": "function_name", "arguments": {"param": "value"}}]
</function_calls>

For text responses, use plain text with no tags.

## Self-Check Before Responding
1. Am I using <function_calls> tags with valid JSON?
2. Are function names exact?
3. Have I included all functions from all conversation turns?
4. Are there any repeated calls?
5. Are there any unnecessary exploration or auth calls?
6. Are all parameters correctly extracted?
\end{SkillBlock}

\subsubsection{LiveMath: $v_1 \rightarrow v_2$}

\subsubsection{Before evolution: LiveMath $v_1$}
\begin{SkillBlock}
---
type: task_skill
task_name: LiveMath theorem MCQ solver
current_version: v1
parent_version: earlier evolved version
mutation_brief: repair direct guessing and over-strong choices
---

# Skill: LiveMath Theorem-Grounded MCQ

## 1. Definition
Given a theorem-based multiple-choice math question, choose the single correct option. The score depends only on exact match of the final option label.

## 2. Option Comparison Rules
1. Read all options before answering. The correct option is the statement that exactly fits the problem assumptions. It should be neither weaker nor stronger than what the problem supports.
2. The correct option is not necessarily the strongest-looking statement. A stronger or more complete statement often needs an extra assumption not given in the problem. Such an option should be rejected.
3. Track quantifiers such as exists, for every, if and only if, exactly when, and unique.
4. When two options are close, compare them side by side and find the one wording difference.

## 3. Theorem-Level Precision
First ask which option needs an extra assumption not stated in the problem. If an option needs extra smallness, extra regularity, stronger initial data, or a global assumption not given in the question, reject it.

1. Check over-strong options first.
2. Check whether an option is too weak and loses equality, characterization, or equivalence.
3. Check constants and parameter dependence.

## 4. Answer Process
The agent must not output the answer tag before doing the reasoning.

Step 1: Extract all assumptions from the problem.
Step 2: Mark the claim made by each option.
Step 3: Reject distractors one by one.
Step 4: Confirm the chosen option matches all assumptions and does not need extra assumptions.

## 5. Common Errors
- Answering directly without reasoning.
- Choosing an over-strong option.
- Mixing quantifiers.
- Missing equality or characterization.
- Ignoring changed assumptions.

## 6. Output Format
Write the reasoning first. Then output exactly one option label inside <answer>...</answer>.
\end{SkillBlock}

\subsubsection{After evolution: LiveMath $v_2$}
\begin{SkillBlock}
---
type: task_skill
task_name: LiveMath theorem MCQ solver
current_version: v2
parent_version: v1
mutation_brief: remove one-sided conservative bias and use a two-way check
---

# Skill: LiveMath Theorem-Grounded MCQ

## 1. Definition
Given a theorem-based multiple-choice math question, choose the single correct option. The score depends only on exact match of the final option label.

## 2. Option Comparison Rules
1. Read all options before answering. The correct option is the statement that exactly fits the problem assumptions. It should be neither weaker nor stronger than what the problem supports.
2. Do not assume the answer is the strongest option or the most conservative option. Strength is not the rule. The only rule is exact fit to the stated assumptions, quantifiers, equality cases, and parameter dependence.
3. Track quantifiers such as exists, for every, if and only if, exactly when, and unique.
4. When two options are close, compare them side by side and find the one wording difference.

## 3. Theorem-Level Precision
Use a two-way check for each option.

1. Upward check: is the option too strong? Does it need an extra assumption not given in the problem? If yes, reject it.
2. Downward check: is the option too weak? Does it lose a characterization, equality case, or full equivalence that the problem supports? If yes, reject it.

After both checks, the only remaining option is the answer. Do not prefer or reject an option only because it looks stronger or safer.

## 4. Answer Process
The agent must not output the answer tag before doing the reasoning.

Step 1: Extract all assumptions from the problem.
Step 2: Mark the claim made by each option.
Step 3: Reject distractors one by one.
Step 4: Run the two-way self-check: too strong and too weak.

## 5. Common Errors
- Answering directly without reasoning.
- Choosing an over-strong option not supported by the assumptions.
- Choosing an over-weak option that drops a conclusion the assumptions support.
- Mixing quantifiers.
- Ignoring changed assumptions.

## 6. Output Format
Write the reasoning first. Then output exactly one option label inside <answer>...</answer>.
\end{SkillBlock}

\subsubsection{DocVQA: $v_2 \rightarrow v_3$}

\subsubsection{Before evolution: DocVQA $v_2$}
\begin{SkillBlock}
---
type: task_skill
task_name: DocVQA document image QA
current_version: v2
parent_version: v1
mutation_brief: symbol spacing, currency signs, punctuation, and answer granularity
---

# Skill: DocVQA Document Image QA

## 1. Definition
Given a document image and a question, read the visible document content and give an exact answer. ANLS scoring is sensitive to missing text, extra text, wrong digits, and punctuation changes.

Core rule: the answer must come from visible content in the image.

## 2. Visual Evidence Rules
1. Read before answering. Locate the region related to the question.
2. Prefer the shortest complete span that answers the question.
3. If nearby strings look possible, choose the one whose label or layout matches the question.

## 3. Exact Answer Rules
1. Copy names, numbers, dates, amounts, and IDs as exactly as possible.
2. Direct extraction is better than rewriting.
3. Compare the answer with nearby candidates before final output.

## 4. Answer Process
Step 1: Understand the question type and key words.
Step 2: Locate the evidence region by label or layout.
Step 3: Extract the exact answer and keep original format.
Step 4: Check for extra words, missing words, wrong digits, and nearby distractors.

## 5. Exact Normalization and Granularity Rules
1. Keep tight spacing around symbols such as initials, currency signs, percent signs, brackets, and hyphens. Do not add or remove spaces.
2. Keep currency and unit signs when they are attached to a number.
3. Do not add an extra period or comma to a numeric answer.
4. Remove decorative marks around page numbers, but keep needed field prefixes such as Schedule or series.
5. Include close attached details such as brackets, model numbers, and qualifiers.
6. Match the question granularity. Do not answer with a whole title block if the question asks for a shorter name. Do not answer with a value row if the question asks for the field name.
7. For multiple items, use the connector shown in the document, usually and.

## 6. Output Format
Put the final answer inside <answer>...</answer>. Only put the answer itself inside the tags.
\end{SkillBlock}

\subsubsection{After evolution: DocVQA $v_3$}
\begin{SkillBlock}
---
type: task_skill
task_name: DocVQA document image QA
current_version: v3
parent_version: v2
mutation_brief: turn scattered rules into a final self-check list
---

# Skill: DocVQA Document Image QA

## 1. Definition
Given a document image and a question, read the visible document content and give an exact answer. ANLS scoring is sensitive to missing text, extra text, wrong digits, and punctuation changes.

Core rule: the answer must come from visible content in the image.

## 2. Visual Evidence Rules
1. Read before answering. Locate the region related to the question.
2. Prefer the shortest complete span that answers the question.
3. If nearby strings look possible, choose the one whose label or layout matches the question.

## 3. Exact Answer Rules
1. Copy names, numbers, dates, amounts, and IDs as exactly as possible.
2. Direct extraction is better than rewriting.
3. Compare the answer with nearby candidates before final output.

## 4. Answer Process
Step 1: Understand the question type and key words.
Step 2: Locate the evidence region by label or layout.
- If the question points to a chart or table, first decide whether it asks for a title or a value. Questions such as what levels are studied usually ask for a field name, not the numeric ranges under that field.
Step 3: Extract the exact answer and keep original format.
Step 4: Before writing <answer>, run this checklist:
- Symbol spacing: remove extra spaces around $, %, brackets, and initials if the image shows no space.
- Numeric ending: remove a trailing period from pure numeric answers.
- Symbol keeping: keep attached currency signs, percent signs, and units.
- Granularity: answer the core phrase, title, or field name asked by the question.
- Basic check: no extra words, missing words, wrong digits, or nearby distractor.

## 5. Exact Normalization and Granularity Rules
1. Keep tight spacing around symbols such as initials, currency signs, percent signs, brackets, and hyphens. Do not add or remove spaces.
2. Keep currency and unit signs when they are attached to a number.
3. Do not add an extra period or comma to a numeric answer.
4. Remove decorative marks around page numbers, but keep needed field prefixes such as Schedule or series.
5. Include close attached details such as brackets, model numbers, and qualifiers.
6. Match the question granularity. Do not answer with a whole title block if the question asks for a shorter name. Do not answer with a value row if the question asks for the field name.
7. For multiple items, use the connector shown in the document, usually and.

## 6. Output Format
Put the final answer inside <answer>...</answer>. Only put the answer itself inside the tags.
\end{SkillBlock}

\end{document}